\def\eqref#1{equation~\ref{#1}}
\def\1{\bm{1}}
\newcommand{\train}{\mathcal{D}}
\DeclareMathAlphabet{\mathsfit}{\encodingdefault}{\sfdefault}{m}{sl}
\SetMathAlphabet{\mathsfit}{bold}{\encodingdefault}{\sfdefault}{bx}{n}
\def\gA{{\mathcal{A}}}
\def\gF{{\mathcal{F}}}
\def\gN{{\mathcal{N}}}
\def\gP{{\mathcal{P}}}
\def\gX{{\mathcal{X}}}
\def\gY{{\mathcal{Y}}}
\newcommand{\E}{\mathbb{E}}
\DeclareMathOperator{\sign}{sign}
\newtheorem{definition}{Definition}
\newtheorem{proposition}{Proposition}
\newtheorem{claim}{Claim}
\newcommand{\TD}{\Tilde{D}}
\newcommand{\error}{\mathrm{error}}
\NewDocumentCommand{\heng}
{ mO{} }{\textcolor{orange}{\textsuperscript{\textit{Heng}}\textsf{\textbf{\small[#1]}}}}
\NewDocumentCommand{\ziqi}
{ mO{} }{\textcolor{blue}{\textsuperscript{\textit{Ziqi}}\textsf{\textbf{\small[#1]}}}}
\NewDocumentCommand{\yuexin}
{ mO{} }{\textcolor{red}{\textsuperscript{\textit{Yuexin}}\textsf{\textbf{\small[#1]}}}}
\NewDocumentCommand{\daogao}
{ mO{} }{\textcolor{green}{\textsuperscript{\textit{Dg}}\textsf{\textbf{\small[#1]}}}}
\newcommand{\modelGrad}{AugPro-FGSM}
\newcommand{\modelGradD}{AugPro-FGSMD}
\newcommand{\modelGradR}{AugPro-FGSMR}
\newcommand{\modelMix}{AugPro-Mix}
\newcommand{\modelKD}{KD}
\newcommand{\calX}{\mathcal{X}}
\newcommand{\calY}{\mathcal{Y}}
\newcommand{\calN}{\mathcal{N}}
\newcommand{\KD}{\mathrm{KD}}
\newcommand{\Aug}{\mathrm{Aug}}
\newcommand{\mix}{\mathrm{MixUp}}
\newcommand{\mixp}{\mathrm{AugPro\text{-}Mix}}
\newcommand{\FGSM}{\mathrm{FGSM}}
\newcommand{\FGSMP}{\mathrm{AugPro\text{-}FGSM}}
\newcommand{\poly}{\mathrm{poly}}
\newcommand{\Sim}{\mathop{\hbox{Sim}}}
\newcommand{\CrossEntropy}{\mathop{\hbox{CrossEntropy}}}
\newcommand{\repmethod}{representation interpolation}
\newcommand{\tokmethod}{token replacement}
\newcommand{\modmethod}{augmentation with models}
\newcommand{\Repmethod}{Representation interpolation}
\newcommand{\Tokmethod}{Token replacement}
\newcommand{\Modmethod}{Augmentation with models}
\title{Augmentation with Projection: \\ Towards an Effective and Efficient \\ Data Augmentation Paradigm for Distillation}
\author{Ziqi Wang$^1$\thanks{Work was done when the first author was interning at Google.} ,\  Yuexin Wu$^{2}$\thanks{Corresponding author},\ Frederick Liu$^2$,\ Daogao Liu$^3$,\ Le Hou$^2$,\ \textbf{Hongkun Yu}$^2$,\ \textbf{Jing Li}$^2$,\\ \textbf{Heng Ji}$^1$ \\
$^1$ University of Illinois Urbana-Champaign\ \ $^2$ Google\ \ $^3$ University of Washington \\
\{ziqiw9, hengji\}@illinois.edu \ \ \{crickwu, frederickliu, lehou, hongkuny, jingli\}@google.com \ \ \\ dgliu@uw.edu
}
\begin{document}

\maketitle

\begin{abstract}
Knowledge distillation is one of the primary methods of transferring knowledge from large to small models. However, it requires massive task-specific data, which may not be plausible in many real-world applications. Data augmentation methods such as \repmethod, \tokmethod, or \modmethod~are applied to tackle this problem. However, these data augmentation methods either potentially cause shifts in decision boundaries (\repmethod), are not expressive enough (\tokmethod), or introduce too much computational overhead (\modmethod). To this end, we propose AugPro  (\textbf{Aug}mentation with \textbf{Pro}jection), an effective and efficient data augmentation method for distillation. Our method builds on top of \repmethod~augmentation methods to maintain the diversity of expressions and converts the augmented data to tokens to avoid shifting decision boundaries. It uses simple operations that come with little computational overhead. The results on multiple GLUE tasks show that our methods can improve distillation performance by a large margin at a low time cost. Codes are available at \url{https://github.com/google-research/google-research/tree/master/augpro}.
\end{abstract}
\section{Introduction}





Large-scale language models \citep{devlin2018bert,raffel2020exploring,NEURIPS2020_1457c0d6,zhang2022opt} have achieved great success on various natural language processing (NLP) tasks, such as information extraction \citep{lu2021text2event} and question answering \citep{kassner2020bert}. However, large-scale models have high computational overhead, which limits their deployment in edge devices and fast response scenarios \citep{sun2020mobilebert}. One widely used solution is to perform knowledge distillation \citep{hinton2015distilling} from large-scale models to small-scale models. This method, however, usually requires a large amount of data to guarantee the transfer quality, which may not be easily obtained in real-world applications. 
To this end, data augmentation methods are applied \citep{liang2020mixkd,9251129,zhang2022adversarial} to improve the distillation performance.


There are three major types of data augmentation methods: (1) \Repmethod. For example, ~\cite{liang2020mixkd}, ~\cite{chen-etal-2020-mixtext} and ~\cite{sun2020mixup} apply linear interpolation \citep{zhang2017mixup} to word embeddings, hidden states between transformer layers, and encoder outputs, respectively, to augment the original dataset with virtual data points. Data points are virtual because they are not real language inputs. Instead, they are representations (e.g., embeddings).
(2) \Tokmethod. \cite{kobayashi-2018-contextual} replaces tokens with their synonyms. Easy Data augmentation \citep{wei2019eda} combines synonym replacement, random insertion, random swap, and random deletion. (3) \Modmethod. ~\cite{yoo2021gpt3mix} and ~\cite{zhou2021flipda} use GPT-3 \citep{NEURIPS2020_1457c0d6} and T5 \citep{raffel2020exploring} respectively as the language model to generate new text data of similar types. (1) supports many operations such as linear interpolation \citep{zhang2017mixup} and small perturbation \citep{madry2017towards}. It makes the methods very expressive in generating a diverse range of data. However, the newly generated representations (e.g., embeddings) may sit outside of the real data distribution. For instance, word embeddings are converted from a vocabulary in the text domain. Performing augmentation at this level may result in representations that do not have their counterparts in the vocabulary. As a result, the augmented data may mislead the model to generate a shifted decision boundary that can largely affect the qualities (Section \ref{Sec:intuition}). (2) can generate in-domain data easily. By using synonym replacement \citep{wang2015s}, new data can be obtained at a low cost. Despite this good property, this stream of methods lacks the ability to generate diversified data. 
Subsequently, they contribute little to sampling low-resource data areas and limit the performance gains in practice. 
(3) generates both diversified and in-domain data using large language models such as GPT-3 \citep{NEURIPS2020_1457c0d6} and T5 \citep{raffel2020exploring}. Due to their large computational overheads, on the other hand, the final distillation quality will be highly limited to the amount of generated data, which is usually not affordable to the scale of even tens of thousands of sentences in practice. Figure \ref{fig:relation} summarizes the advantages of each augmentation method.

\begin{wrapfigure}{r}{0.5\textwidth}
    \centering
    \resizebox{0.5\textwidth}{!}{
    \includegraphics{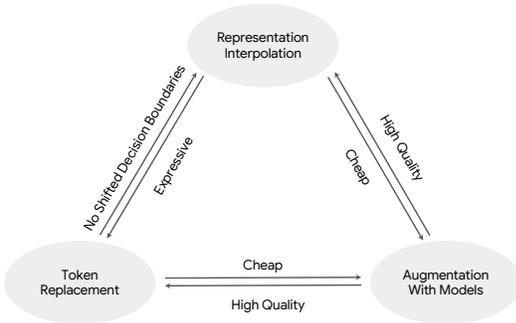}
    }
    \caption{An illustration of each augmentation method's advantages.}
    \label{fig:relation}
    \vspace{-3mm}
\end{wrapfigure}

Considering all the approaches above, we propose AugPro, an effective and efficient data augmentation method for the distillation scenario, which absorbs the advantages above without being limited by their drawbacks. Specifically, AugPro: (1) (effectiveness) is as expressive as \repmethod; (2) (effectiveness) does not mislead decision boundaries; (3) (efficiency) has low computational overhead. 
In distillation settings, we can always use the teacher to label the hallucinated data in the knowledge distillation scenario. This suggests that we can encourage AugPro to produce as diverse data as possible that are not limit to instances with only the same or flipped labels. 

Concretely, our method builds on top of \repmethod~ augmentation methods (property (1)), which does not constrain the generated data to be within small regions of their ``parents''. The key of AugPro is to convert the augmented data to the format of tokens through projections (property (2)) with low-cost operations (property (3)). We conduct our experiments on GLUE \citep{wang-etal-2018-glue} datasets. Results show that our method could boost the distillation performance significantly with low computational overhead.



To sum up, our contributions are:
\begin{itemize}
    \item We propose an effective and efficient data augmentation method for knowledge distillation.
    \item We empirically evaluate the effectiveness and efficiency of AugPro and theoretically examine that AugPro satisfies three properties under certain circumstances.
\end{itemize}


\section{Related Work}
\label{Sec:related}

\textbf{Knowledge Distillation} Knowledge distillation was first proposed by \citep{hinton2015distilling}. It aims to distill knowledge from one model to another by minimizing the distance between the outputs of two models on the same input. With the rise of transformers \citep{vaswani2017attention} and BERT \citep{devlin2018bert}, more and more attention has been paid to the distillation of pre-training language models. ~\cite{tang2019distilling} distill fine-tuned BERT to a single-layer BiLSTM network and makes the BiLSTM network as good as ELMo \citep{peters2018deep}. ~\cite{sun2019patient} not only distill from outputs but also distill from teacher models' hidden layers. These methods distill language models in the fine-tuning stage, whereas ~\cite{sanh2019distilbert} and ~\cite{sun2020mobilebert} focus on distilling language models in the pre-training stage directly to make student models task-agnostic. TinyBERT \citep{jiao2019tinybert} distill BERT from both the pre-training and fine-tuning stages. We focus on a widely used setting. We distill knowledge in the fine-tuning stage by minimizing the distance between two models' outputs. 

\textbf{Data Augmentation} Representation interpolation methods are popular in the computer vision research community. MixUp \citep{zhang2017mixup} uses linear interpolation to get augmented images and labels. Given an image $x_1$, $x_2$ and their labels $y_1$, $y_2$, MixUp uses a linear interpolation to generate a new data point $x'$ and its label $y'$:

\begin{equation}
    \begin{aligned}
    x' = \mix(x_1, x_2) = \lambda x_1 + (1-\lambda) x_2,\ \ \ 
    y' = \mix(y_1, y_2) = \lambda y_1 + (1-\lambda) y_2
    \end{aligned}
    \label{eq:mix}
\end{equation}

FGSM \citep{goodfellow2014explaining} and PGA \citep{madry2017towards} use gradients to generate adversarial examples. Given an image $x$, FGSM will generate new data $x'$:
\begin{equation}
    \begin{aligned}
        x' = x + \epsilon {\rm Sign}(\nabla_x \mathcal{L})
    \end{aligned}
    \label{eq:fgsm}
\end{equation}
\vspace{-6mm}

where $\mathcal{L}$ is the loss of a specific task and $\epsilon$ is a small value. $x'$ and $x$ have the same label. CutMix \citep{yun2019cutmix} cuts images and then concatenates them together to get a new one.
Though these methods were originally designed for images, they can be adapted to NLP tasks. ~\cite{liang2020mixkd} use MixUp on word embeddings to augment data for knowledge distillation. ~\cite{chen2020mixtext} use MixUp on hidden states between transformer layers. ~\cite{jindal-etal-2020-augmenting} also use MixUp on hidden states but consider the effect of mean and variance. ~\cite{zhang2022adversarial} apply PGA to student models' embeddings and leave teacher models' embeddings unchanged, finding that PGA can benefit knowledge distillation. Token replacement methods mainly focus on language inputs. Synonym replacement methods  \citep{kobayashi-2018-contextual} replace tokens with their synonyms. Easy data augmentation (EDA) \citep{wei2019eda} incorporates synonym replacement, random insertion, random swap, and random deletion. TreeMix \citep{zhang-etal-2022-treemix} uses a constituency parser to decide which token should be replaced. Augmentation with models is another approach to generating new data. FlipDA \citep{zhou2021flipda} uses T5 to generate data that has flipped labels. GPT3Mix \citep{yoo2021gpt3mix} designs prompts and uses GPT3 to generate new data. Back translation \citep{yu2018qanet} uses neural networks to translate inputs to another language and then translate them back. 
Our method (AugPro) uses \repmethod~ as the backbone and utilizes projection to convert representations to tokens with low-cost operations.  
\section{Motivating Examples}
\label{Sec:intuition}

Though representation interpolation methods have the good property of generating diverse data, we find that these vision techniques cannot be directly applied to NLP tasks.
This is because representation interpolation augments data in a continuous manner, under which case the new data may never exist in the discrete input space, causing the decision boundary shifts.


Take an example of a simple two-dimensional problem with linear separability (Figure~\ref{fig:example}). 
Let $\calX=\{x_1,x_2,x_3,x_4\}$ be the universe of all data to learn, and $\gY=\{ y_1,y_2,y_3,y_4\}$ be the corresponding labels.
Suppose we know all of $\gX,\gY$ and run the linear support vector machine (SVM) with a hard margin, i.e., $\min_{\beta,b}\|\beta\|^2$ such that
$ y_i(\beta^\top x_i+b)\geq 1$ for all $i$, we get the solution $\beta^*,b^*$
Since it is hard to get all data in the real-world setting, we suppose that we only have $\{x_1,x_3\}$ as the training dataset.
If we simply use MixUp, we get $x_{\mix}$, 
as the augmented data, whose label $y_{\mix}=\sign((\beta^*)^\top x_{\mix}+b^*)=y_3=y_4$.
Now running the linear SVM with $\{x_2,x_4,x_{\mix}\}$ with labels $\{y_2,y_4,y_{\mix}\}$, we get $\beta_{\mix}$ and $b_{\mix}$.
Nevertheless, 
$\sign(\beta_{\mix}^\top x_3+b_{\mix})\neq y_3$.
As a comparison, if we project $x_{\mix}$ to its nearest neighbor and get $x_{\mathrm{MixUp}-P}=x_2$ whose label $y_{\mathrm{MixUp}-P}=y_2$, running SVM with $\{x_1,x_2,x_{\mathrm{MixUp}-P}\}$ and $\{y_1,y_2,y_{\mathrm{MixUp}-P}\}$ can get $\beta_{\mathrm{MixUp}-P}$ and $b_{\mathrm{MixUp}-P}$, which can classify all data correctly. Appendix \ref{app:number} shows the concret number of each parameter.

\begin{figure}
    \begin{minipage}{0.3\textwidth}
    \centering
    \resizebox{\linewidth}{!}{
\includegraphics{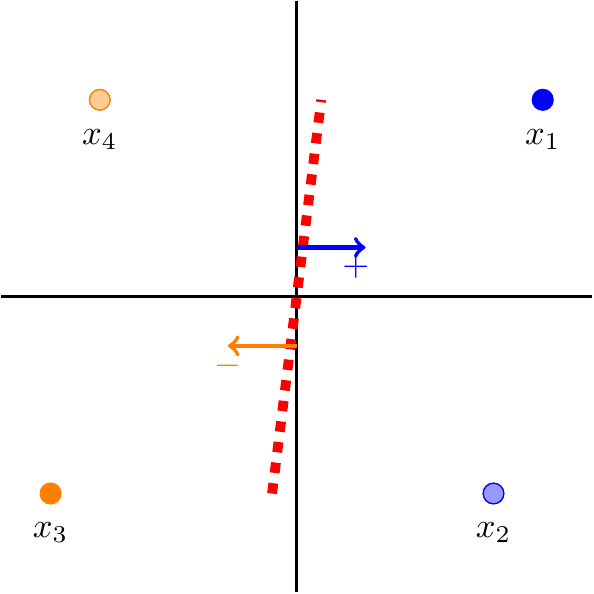}
    }
    \centerline{(a) Ground-Truth}
    \end{minipage}
    \hfill
    \begin{minipage}{0.3\textwidth}
    \centering
    \resizebox{\linewidth}{!}{
\includegraphics{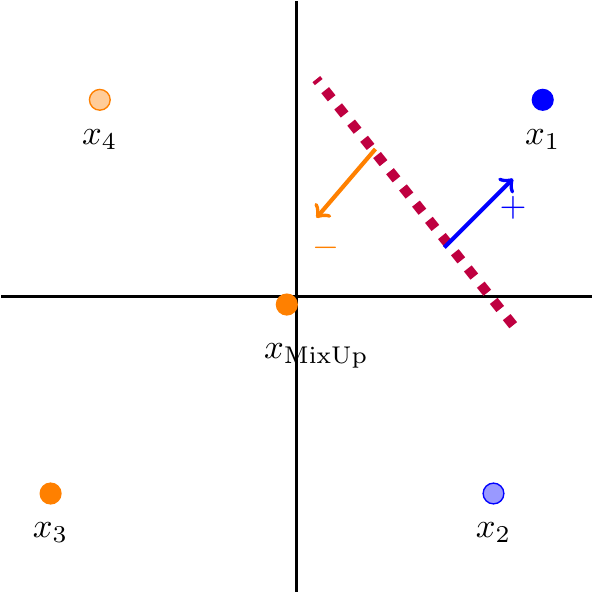}
    }
    \centerline{(b) MixUp SVM}
    \end{minipage}
    \hfill
    \begin{minipage}{0.3\textwidth}
        \centering
        \resizebox{\linewidth}{!}{
\includegraphics{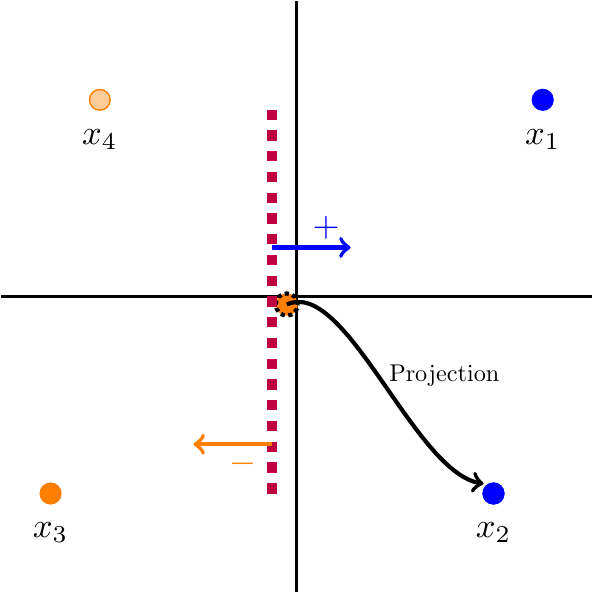}
    }
    \centerline{(c) MixUp with Projection SVM}
    \end{minipage}
    \caption{Decision boundary shifts in 2D space for discrete datasets.
    (a) is the gound-truth, where $x_1,x_3$ are the observable training data while $x_2,x_4$ with transparent colors mean the unseen data.
    In (b), one gets augmented data $x_{\mix}$ with label $y_{\mix}=-1$, and do SVM with $\{x_1,x_3,x_{\mix}\}$ with their labels.
    In (c), one projects $x_{\mix}$ to its nearest neighbor $x_2$, and do SVM with $\{x_1,x_2,x_3\}$. We see that the correction of projection in (c) brings smaller decision boundary shifts than (b).
    }
    \label{fig:example}
\end{figure}

To this end, augmented data should be the real data in the input space, i.e., the format of tokens, to leverage this problem. This observation leads to our method AugPro which uses projection to convert augmented representations to symbolic tokens. Compared with the virtual data point generated by \repmethod, projection can explore more real data and leads to a lower error (Section \ref{Sec:ourmethod} and Appendix \ref{app:moredata}).  

\section{Methodology}
\label{Sec:method}
In this section, we first formulate the definition of knowledge distillation in NLP \citep{hinton2015distilling} and then introduce our method AugPro.

\subsection{knowledge distillation}
Knowledge distillation is a method to distill knowledge from large-scale models to small-scale models. Formally speaking, considering an NLP classification task, we have a corpus $\mathcal{D} = \{(x_i,y_i)\}_{i=1}^N$ that contains $N$ input-output pairs, where $x_i$ is an input sentence with tokens $x_i = [w_{i_1},\cdots, w_{i_{n_i}}], w_{k} \in V, V$ is the vocabulary, $n_i$ is the number of tokens in $x_i$. $y_i$ is the output label for $x_i$. We use plain texts rather than bold texts for $x$ because language inputs are a sequence of tokens, which is different from images. Then we distill knowledge from a large-scale model $f(\cdot, \theta_T)$ with parameter $\theta_T$ (i.e., a teacher model) to a small-scale model $g(\cdot, \theta_S)$ with parameter $\theta_S$ (i.e., a student model). In practice, $\theta_T$ has much more parameters than $\theta_S$. The distillation process can be divided into two stages:

\begin{itemize}[leftmargin=*]
    \item Teacher training. Optimize $\theta_T$ on the dataset $\mathcal{D}$. In classification problems, we use cross-entropy loss to do empirical risk minimization on $\theta_T$: 
    \begin{align}
        \theta_T' = \mathop{\arg\min}\limits_{\theta_T} \frac{1}{N}\sum_{i=1}^N \CrossEntropy(f(x_i, \theta_T),y_i)
    \end{align}
    \vspace{-4mm}
    \item Student training. Optimize $\theta_S$ on the dataset $\mathcal{D}$ with both ground-truth labels and outputs from teachers. In classification problems, 
    \begin{align}
    \theta_S' = \mathop{\arg\min}\limits_{\theta_S}\ \mathcal{L}_{\KD} = \mathop{\arg\min}\limits_{\theta_S} \frac{1}{N}\sum_{i=1}^N \CrossEntropy(g(x_i, \theta_S),y_i) + d(g(x_i, \theta_S), f(x_i, \theta_T'))
    \label{eq:LKD}
    \end{align}
    \vspace{-2mm}
    where $d(\cdot, \cdot)$ is a distance function. In practice, $d(\cdot, \cdot)$ could be cross-entropy or mean square error. 
\end{itemize}
Empirical results from former studies \citep{hinton2015distilling, sun2020mobilebert, sanh2019distilbert, sun2019patient} show that knowledge distillation will train a better $\theta_S'$ because the student model not only learns from ground-truth labels but also learns the generality from the teacher model. 

 Note for the student training, we can combine knowledge distillation and data augmentation together: $$\theta_S' = \mathop{\arg\min}\limits_{\theta_S}\ \mathcal{L}_{\KD} + \mathcal{L}_{\Aug}$$ where $ \mathcal{L}_{\Aug}$ denotes the knowledge distillation loss on augmented data which leads to different variants of methods. As one important way to help the student learn more effectively, how to generate new data with augmentation loss is the key and major discussion topic in the remaining sections.

\vspace{-2mm}
\begin{algorithm}[!t]
        \small
        \caption{AugPro Algorithm}
        \KwInput{Dataset $\mathcal{D} = \{(X,Y)\}$, \repmethod~ function $h(\cdot)$, projection function $p(\cdot)$, the teacher model $f$ with fine-tuned parameters $\theta_T'$ and the student model $g$ with parameter $\theta_S$ that needs to be fine-tuned,  vocabulary $V$, learning rate $\eta$, training steps $K$,  batch size $B$, sentence length $L$, embedding dimension $H$.}
        \KwOutput{Fine-tuned $\theta_S'$}
        \begin{itemize}
            \item $k = 0$
            \item \textbf{while} $k < K$
            \begin{itemize}
                \item Sample a data batch $\mathcal{B} = \{(x,y)\} \in V^{B \times L}$ from $\mathcal{D} = \{(X,Y)\}$.
                \item Get augmented representations $\mathcal{B}_{rep} = h(\mathcal{B})  \in \mathbb{R}^{B \times L \times H}$ (e.g., Equation (\ref{eq:mix}, \ref{eq:fgsm}))
                \item Project representations to tokens $\mathcal{B}' = p(\mathcal{B}_{rep})  \in V^{B \times L}$ (e.g., Equation \ref{eq:x_mixp})
                \item Use Equation (\ref{eq:LKD}) to compute $\mathcal{L}_{\KD}$.
                \item Compute loss $\mathcal{L}_{\Aug}$ (e.g., Equation (\ref{eq:L_mixp}, \ref{eq:L_FGSMP})) based on $\mathcal{B}'$, $f$ and $g$ .
                \item $\theta_S = \theta_S - \eta \nabla (\mathcal{L}_{\KD}+\mathcal{L}_{\Aug}), k = k+1$
            \end{itemize}
            \item \textbf{return} $\theta_S$
        \end{itemize}
        
    \label{algo1}
\end{algorithm}
\subsection{AugPro: Augmentation with Projection}
\label{Sec:ourmethod}
In this section, we will introduce four variants of $\mathcal{L}_{\Aug}$: two backbones (MixUp and FGSM) and two AugPro variants building on top of them. Figure \ref{fig:concept} shows the concept of our proposed method.

AugPro builds on top of \repmethod\ augmentation methods. The pipeline (Algorithm \ref{algo1}) can be divided into three steps: (1) We first get augmented representations (i.e. $h(\cdot)$). (2) Then we use projection to convert representations to tokens (i.e. $p(\cdot)$)). (3) At last, we compute $\mathcal{L}_{\Aug}$ to update student models. The key to step (2) is projection. Concretely, language models map tokens to representations, and projection aims to find an inverse mapping to map representations back to tokens. This way, AugPro will avoid shifting decision boundaries (Section \ref{Sec:intuition}). However, the inverse mapping is hard to find in practice. First, popular language model architectures such as transformers \citep{vaswani2017attention} are usually complex and are hard to get the inverse mapping. Second, the input space of language is discrete, making the mapping irreversible. Thus, we could only use the approximation technique to get the approximated inverse mapping. To this end, we focus on the inverse mapping on the embedding level. First, the embedding mapping's structure is much simpler than the transformer layer and is straightforward for us to find the inverse mapping. Second, we can use the nearest-neighbors as the approximation method, which is a cheap approximation.

Based on the analysis above, we use the nearest-neighbors to find our projection, i.e., the function $p(\cdot)$ in the Algorithm \ref{algo1}. AugPro does not rely on specific \repmethod~ methods. In this paper, we apply AugPro to MixUp and FGSM, i.e. the $h(\cdot)$ in Algorithm \ref{algo1} is MixUp or FGSM as in Equation (\ref{eq:mix},\ref{eq:fgsm}). We will illustrate two variants to perform projection (step (2)) and compute $\mathcal{L}_{\Aug}$ loss (step (3)) in the following texts.

We slightly abuse the notion of $f$ and $g$ to illustrate AugPro better. We divide $f$ and $g$ into two parts: the first part is an embedding function that maps tokens to embedding vectors ($f_e$ and $g_e$, $e$ denotes \textit{embeddings}), the rest is the second part ($f_l$ and $g_l$, $l$ denotes \textit{layers}). Under this definition, $f = f_l \circ f_e$ and $g = g_l \circ g_e$.

\textbf{\modelMix} We get \modelMix\ by applying the AugPro paradigm to MixUp. First, we apply MixUp on word embeddings and labels. This gives us embeddings from the teacher $\mathbf{e}_{\mix}^f$, embeddings from the student $\mathbf{e}_{\mix}^g$, and the label $y_{\mix}$. $\mathcal{L}_{\Aug}$ becomes $$\mathcal{L}_{\mix} = \frac{1}{M}\sum_{j=1}^M [\CrossEntropy(g_l(\mathbf{e}_{\mix,j}^g, \theta_S),y_{\mix,j}) + d(g_l(\mathbf{e}_{\mix,j}^g, \theta_S), f_l(\mathbf{e}_{\mix,j}^f, \theta_T'))]$$ when we use MixUp data to construct losses, where $M$ denotes the number of augmented data.

For AugPro, we use the nearest-neighbors to get \modelMix\ tokens $x_{\mixp}$:
\vspace{-1mm}
\begin{equation}
    \begin{aligned}
        x_{\mixp} &= [w_{\mixp,1},\cdots, w_{\mixp,n}] \\
        \text{where }w_{\mixp,i} &= \mathop{\max}\limits_{w \in V}\ \Sim (\mathbf{e}_{\mix}^f(i),\ f_e(w))
    \end{aligned}
    \label{eq:x_mixp}
\end{equation}
\vspace{-2mm}

$\Sim$ means the similarity function, which can be the cosine similarity. $\mathbf{e}(i)$ means the $i$th embedding vector in $\mathbf{e}$. Concrete examples can be found in Appendix \ref{app:example}. Then the loss function $\mathcal{L}_{\Aug}$ becomes: 
\begin{align}
\mathcal{L}_{\mixp} = \frac{1}{M}\sum_{j=1}^M d(g(x_{\mixp,j}, \theta_S), f(x_{\mixp,j}, \theta_T'))
\label{eq:L_mixp}
\vspace{-3mm}
\end{align}
We do not use $y_{\mix}$ because the projection operation (the nearest-neighbors in \modelMix) does not necessarily preserve the label.

\textbf{\modelGrad}  Though adversarial examples (AE) are originally aimed to improve the robustness of models and may harm the performance on clean inputs \citep{raghunathan2019adversarial}, ~\cite{zhang2022adversarial} shows that AE can benefit knowledge distillation. We get \modelGrad\ by applying AugPro to FGSM. We first apply FGSM to the student model and get augmented data $\mathbf{e}^g_{\FGSM}$. The augmented data can be used to construct $\mathcal{L}_{\Aug}$ directly: $$\mathcal{L}_{\FGSM} = \frac{1}{M}\sum_{j=1}^M d(g_l(\mathbf{e}_{\FGSM,j}^g, \theta_S), f_l(\mathbf{e}_{j}^f, \theta_T'))$$ Following Equation \ref{eq:x_mixp}, we could get $x_{\FGSMP}$ by changing the footnotes accordingly.



We usually set $\epsilon$ in Equation (\ref{eq:fgsm}) large in \modelGrad\ since we do not want $x_{\FGSMP}$ to be the same as the original input, whereas $\epsilon$ in Equation (\ref{eq:fgsm}) is a small value in FGSM. We use the cosine similarity to implement the $\Sim$ function. The loss function $\mathcal{L}_{\Aug}$ becomes: 
\begin{align}
    \mathcal{L}_{\FGSMP} = \frac{1}{M}\sum_{j=1}^M d(g(x_{\FGSMP,j}, \theta_S), f(x_{\FGSMP,j}, \theta_T'))
    \label{eq:L_FGSMP}
\end{align}


\textbf{Label Diversity} We take two sentences from SST-2 dataset \citep{socher-etal-2013-recursive} as an example to further explain that projection does not necessarily preserve labels but generates diverse labels. The first sentence is \texttt{watch on video at home} with the sentiment \texttt{Neutral}. The second sentence is \texttt{as good} with the sentiment \texttt{Positive}. Then we can get the \modelMix~ sentence \texttt{watch good video at home}. Obviously the label of \modelMix~sentence should be \texttt{Positive} rather than the linear interpolation of \texttt{Positive} and \texttt{Neutral}. This is the desired property in distillation as we will use the teacher to label these newly generated data points.

\textbf{Computational Overhead}. If we assume the complexity of computing cosine similarity between two vectors is $\mathcal{O}(d)$ where $d$ is the dimension of vectors, then the complexity of the projection (the nearest-neighbors in our implementation) is $\mathcal{O}(NVd)$, where $N$ is the sentence length and $V$ is the vocabulary size. $N$ is usually within hundreds.
$V$ is usually around $30,000$ in popular pre-train language models using sub-word tokens such as BERT \citep{devlin2018bert} and T5 \citep{raffel2020exploring}. As a result, $\mathcal{O}(NVd)$ brings little costs. On the other hand, the projection operation could be parallelized since the $NV$ similarity calculations do not affect each other. In modern parallel computing architectures, such as GPUs and TPUs, projection can be calculated in a much faster manner. Compared to the major large-scale language models' complexities, this computation will take a small portion of resources. The detailed running time comparison can be found in Section \ref{exp:efficiency}.

\textbf{Three Properties of AugPro.} 
Since AugPro supports operations used in \repmethod~methods, AugPro is expressive (property (1)). AugPro also converts representations to tokens to avoid shifting decision boundaries, leading to a smaller error rate (property (2)). It can be shown that \modelMix~ has a $\frac{1}{4N}$ lower error rate than MixUp, and \modelGrad~ has a $\frac{1}{2N}$ lower error rate than FGSM with certain assumptions (Appendix \ref{app:moredata}). Moreover, AugPro has a low computational overhead to guarantee efficiency (property (3)), as described in the previous paragraph.



\section{Experiments}


\label{Sec:exp}
Our experiments aim to answer two questions: (1) How effective is AugPro when applied to the knowledge distillation scenario? (2) Is AugPro efficient?

\textbf{Datasets and Settings} Following previous knowledge distillation works \citep{liang2020mixkd, zhang2022adversarial}, we use GLUE \citep{wang-etal-2018-glue} datasets as the benchmark. We use EncT5 \citep{liu2021enct5} as our teacher and student models for the following reaons: (1) T5 has a much better performance than BERT and is close to SOTA in many tasks. EncT5 is a simplified version of T5 that uses the whole encoders of T5 but only one decoder layer. EncT5 performs similarly to T5 on classification tasks such as GLUE tasks with fewer parameters. For example, EncT5 (small) only contains 37M parameters but can perform similarly to T5 (small), which contains 77M parameters. Using EncT5 will make the results more convincing and show that our method is still useful even with powerful models. (2) Previous methods \citep{liang2020mixkd, zhang2022adversarial} distill knowledge from a 12-layer BERT to a 6-layer BERT or a 3-layer BERT. However, the gap between the teacher and student models is marginal. Therefore, the improvement space is limited, and the existence of variance will weaken the credibility of the results. To this end, we distill knowledge from EncT5 (Large, 24-layer, 354M, teacher) to EncT5 (small, 8-layer, 37M, student), as the two models have a significant performance gap.

\textbf{Baselines and Training}
We train several baselines for comparison: (1) \textbf{Fine-Tuning (FT)}: We directly fine-tune EncT5 on the dataset. (2) \textbf{Knowledge Distillation (\modelKD)}: We first fine-tune a teacher model (EncT5 Large), then distill knowledge from the teacher model to the student model (EncT5 Small). (3) \textbf{Knowledge Distillation + Back Translation (\modelKD+BT)}: 
Back translation \citep{yu2018qanet} first translates input to another language and then translates it back. 
We choose back translation as a representative method for the data augmentation type “\modmethod”. (4) \textbf{Knowledge Distillation + K-Nearest-Neighbors (\modelKD+KNN)} KNN \citep{wang2015s} first selects tokens from inputs, then replaces them with the K nearest neighbors in the embedding space. KNN can be regarded as one \tokmethod\ method. (5) \textbf{\modelKD+MixUp} (6) \textbf{\modelKD+FGSM} (7) \textbf{\modelKD+ TMix} \citep{chen2020mixtext} MixUp on the hidden state between transformer layers. The last three methods are of the ``\repmethod''\ type. We train student models with 0.6M steps and 512 batch size. Due to the high computation cost, we only augment data to twice as large as the original dataset size for back translation. For all other methods, we augment data to twice as large as the original batch size for each batch, i.e., we augment ${\rm 0.6M\ \text{steps} \cdot 512\ \text{batch size} = 307.2M}$ data in total. More training details are in Appendix \ref{app:training}.

\subsection{Effectiveness of AugPro}
Table \ref{tab:teacherboost} shows the results of knowledge distillation. Due to the high cost, we only report back translation results on the RTE dataset. We first use the training data to train a teacher model and then distill knowledge from the teacher model to the student model on the training data. We can conclude that: (1) All data augmentation methods will benefit the distillation. 
(2) AugPro can significantly improve the distillation performance compared with corresponding baselines. Specifically, AugPro is extremely useful for low-resource datasets such as CoLA and RTE. \modelMix~ achieves scores 5.97\% and 9.02\% higher than MixUp on CoLA and RTE, respectively. \modelGrad~ achieves scores 10.52\% and 8.31\% higher than FGSM on CoLA and RTE, respectively. For large datasets such as MNLI, \modelMix~ and \modelGrad~ can also improve the performance. (3) Moreover, combining \modelGrad~ and \modelMix~ achieves the best performance in all listed methods. Compared with vanilla knowledge distillation, combining \modelMix~ and \modelGrad~ improves the performance from 2\% to 14\%.

\begin{table}[t]
  \centering
  \resizebox{\linewidth}{!}{
  \begin{tabular}{l|cccccccc}
    \toprule
          & \textbf{SST-2} & \textbf{CoLA} & \textbf{MNLI-MM/M} & \textbf{QNLI} & \textbf{QQP} & \textbf{MRPC} & \textbf{STS-B} & \textbf{RTE} \\
          & \textbf{Acc}   & \textbf{Matthew} & \textbf{Acc}    & \textbf{Acc}  & \textbf{Acc/F1} & \textbf{Acc/F1} & \textbf{PC/SC} & \textbf{Acc} \\
          & \textbf{67.3k}   & \textbf{8.5k} & \textbf{392.7k}    & \textbf{104.7k}  & \textbf{363.8k} & \textbf{3.7k} & \textbf{5.7k} & \textbf{2.5k} \\
    \midrule
    \textbf{EncT5$_{24}$-FT (354M)} & $97.20$ & $63.60$ & $91.40/91.10$ & $95.40$ & $92.73/90.00$ & $91.42/93.30$ & $88.19/88.00$ & $86.30$ \\
    \textbf{EncT5$_{8}$-FT\ \ \  (37M)}  & $92.89$ & $45.84$ & $84.69/84.26$ & $89.84$ & $91.45/88.41$ & $87.01/90.91$ & $86.39/85.94$ & $59.21$\\
    \hline
    \hline
    \textbf{EncT5$_{8}$-KD} & $92.09$ & $45.56$ & $85.93/85.61$ & $89.46$ & $91.36/88.24$ & $84.56/88.85$ & $87.29/87.18$ & $61.37$\\
    \midrule
    \textbf{+BT} &  - & - &  - & - &  - & - &  - & $61.73$\\
    \textbf{+KNN} & $94.27$ & $54.60$ & $87.13/87.01$ & $91.54$& $92.14/89.40$& $86.03/90.32$ & $87.14/87.27$ & $66.79$\\
    \textbf{+TMix} & $93.35$ & $44.42$ & $86.79/86.79$ & $91.10$ & $91.76/88.84$& $87.25/90.97$ & $87.52/87.35$&$63.18$\\
    \midrule
     \textbf{+MixUp} & $93.23$ & $51.63$ & $86.73/86.69$ & $91.31$ & $91.82/88.97$ & $88.48/91.68$ & $87.47/87.33$ & $62.82$ \\
     \textbf{+\modelMix} & $94.38$ & $57.60$ & $87.40/87.27$ & $92.06$ & $92.06/89.23$ & $89.46/92.34$ & $88.10/87.87$ & $71.84$\\
    \midrule
     \textbf{+FGSM} & $92.20$ & $46.37$ & $85.88/85.53$ & $89.58$ & $91.21/88.06$ & $84.56/89.23$ & $87.56/87.26$ & $62.09$\\
     \textbf{+\modelGrad} & $94.61$ & $56.89$ & $87.02/86.85$ & $91.67$ & $92.10/89.26$ & $88.24/91.67$ & $87.64/87.51$ & $70.40$\\
     \midrule
     \textbf{+FGSM+MixUp} & $93.12$ & $50.79$ & $86.85/86.65$ & $91.09$ & $91.75/88.85$ & $87.25/90.88$ &$87.15/87.00$ & $62.82$\\
     \textbf{+\modelGrad+\modelMix} & $\bm{95.18}$ & $\bm{59.01}$ & $\bm{87.97/87.87}$ & $\bm{92.92}$ & $\bm{92.30/89.54}$ & $\bm{89.46/92.42}$ & $\bm{88.34/88.04}$ & $\bm{74.73}$ \\
    \bottomrule
    \end{tabular}
    }
  \caption{Knowledge distillation on the GLUE dataset. We first use the training data to train a teacher model and then distill knowledge from the teacher model to the student model on the training data. EncT5$_{\rm L}$ denotes EncT5 with $\rm L$ transformer layers. $\rm L=24$ and $\rm L=8$ denote the teacher model with 354M parameters and the student model with 37M parameters, respectively. 
  }
  \label{tab:teacherboost}
\end{table}

Table \ref{tab:distill} uses a different setting from Table~\ref{tab:teacherboost}. We only keep 10\% training data labeled and assume others are unlabeled. Then we use labeled training data to train a teacher model and unlabeled training data to do knowledge distillation—this is a more realistic setting since it is often easier to get unlabeled data than to get labeled data. The conclusions above still hold. Specifically, AugPro can improve the accuracy from 1\% to 2\% on average on three datasets. Compared with the vanilla distillation, AugPro can improve around 2\% accuracy at most on three datasets.

\begin{table*}[t]
  \centering
  \resizebox{\linewidth}{!}{
  \begin{tabular}{l|cccccccccc}
    \toprule
          & \textbf{KD}  & \textbf{+MixUp} & \textbf{+\modelMix} & \textbf{+FGSM}  & \textbf{+\modelGrad} & \textbf{+FGSM+MixUp} & \textbf{+\modelMix+\modelGrad}\\ 
    \midrule
    \textbf{MNLI-MM/M} & $84.81/84.39$ & $85.53/85.33$ &  $86.36/85.87$ & $85.07/84.58$& $85.87/85.82$ & $85.70/85.65$ & $\bm{86.76/86.81}$\\ 
    \textbf{SST-2} & $92.09$  & $93.35$ & $94.04$ & $92.09$ & $\bm{94.27}$ & $93.46$ & $94.04$\\ 
    \textbf{QNLI} & $89.68$ & $90.28$ & $90.98$ & $89.99$ & $91.03$ & $90.50$ & $\bm{91.58}$ \\ 
    \bottomrule
    \end{tabular}
    }
  \caption{Knowledge distillation on the GLUE dataset with a different setting from Table~\ref{tab:teacherboost}. We regard 10\% of the data as labeled and the rest as unlabeled. The teacher model is first trained on labeled training data and then used for knowledge distillation on unlabeled training data.
  }
  \label{tab:distill}
\end{table*}

\subsection{Efficiency of AugPro}
\label{exp:efficiency}
The efficiency of AugPro lies in two aspects. First, its complexity is low. Second, it can be computed in parallel. To fully demonstrate these two advantages, we report the real-time cost of AugPro and baselines in Table \ref{tab:time}. KD+data augmentation is rough twice the time of vanilla KD since these methods use twice the data as vanilla KD. We can also observe that augmentation with models (KD+BT) takes much more time than other kinds of baselines, which shows that this method is not efficient enough. At last, AugPro brings little computational overhead as the time cost is the same as the baselines. Results also show that KNN is much slower than other methods, which is explained in Appendix \ref{app:knn}.

\begin{table}[t]
  \centering
  \resizebox{\textwidth}{!}{
  \begin{tabular}{l|c|c|c|ccccc}
    \toprule
          & \textbf{KD} & \textbf{+BT}\tablefootnote{Converted time. We run BT on TPU v2 and compute the equivalent time cost.} & \textbf{+KNN} & \textbf{+TMix}& \textbf{+MixUp}& \textbf{+\modelMix}& \textbf{+FGSM} & \textbf{+\modelGrad}\\
    \midrule
    \textbf{Time (min)} & $1.68$ & $13.15$ &$4.57$& $3.48$ & $3.48$&$3.48$&$3.24$&$3.24$\\
    \bottomrule
    \end{tabular}
    }
  \caption{Time (minutes) costs every 1000 steps on average of various methods on 8 TPU v3 slices. Costs contain data augmentation, the forward pass, and the backpropagation. The table is divided into four parts. Each part contains a specific data augmentation type (KD, \modmethod, \tokmethod, \repmethod~and AugPro).}
  \label{tab:time}
\end{table}

\begin{table}[t]
    \begin{minipage}{.48\textwidth}
    \begin{minipage}{\textwidth}
      \centering
      \begin{tabular}{l|ccc}
        \toprule
        $\epsilon$  & \textbf{MNLI-MM/M} & \textbf{SST-2} & \textbf{QNLI} \\
        \midrule
        \textbf{30} & $85.60/85.28$ & $93.69$ & $90.44$\\
        \textbf{35} & $\bm{85.87/85.82}$ & $\bm{94.27}$ & $\bm{91.03}$\\
        \textbf{40} & $85.79/85.58$ & $93.23$ & $90.81$\\
        \textbf{100} & $85.18/84.74$ & $91.97$ & $90.44$\\
        \bottomrule
        \end{tabular}
      \caption{KD+\modelGrad~ performance with different $\epsilon$.}
      \label{tab:step}
    \end{minipage}
    \vfill
    \vspace{2mm}
    \begin{minipage}{\textwidth}
        \centering
      \resizebox{\textwidth}{!}{
      \begin{tabular}{l|ccc}
        \toprule
          \textbf{Data Size}    & \textbf{20\%} & \textbf{50\%} & \textbf{100\%} \\
        \midrule
        \textbf{Finetune} & $\bm{80.99}/\bm{80.71}$ & $\bm{83.43}/\bm{82.55}$ & $\bm{84.69}/\bm{84.26}$\\
        \midrule
        \textbf{\modelMix} & $80.62/80.44$ & $83.24/82.60$ &$84.53/83.92$\\
        \textbf{\modelGrad} & $80.74/80.32$ & $83.16/82.44$ &$84.37/83.61$\\
        \bottomrule
        \end{tabular}
        }
      \caption{\modelMix\ and \modelGrad\ are used to fine-tune student models. MixUp labels are used for \modelMix\ data. \modelGrad~ uses the original label.}
      \label{tab:selfboost}
    \end{minipage}
  \end{minipage}
  \hfill
    \begin{minipage}{.48\linewidth}
    \centering
  \resizebox{\linewidth}{!}{
  \begin{tabular}{l|cc}
    \toprule
      \textbf{KD}    & \textbf{MNLI-MM/M} & \textbf{SST-2} \\
    \midrule
    \textbf{+\modelGrad} & $87.02/86.85$ & $\bm{94.61}$\\
    \textbf{+\modelGradD} & $\bm{87.08/86.89}$ & $94.27$ \\
    \textbf{+\modelGradR} &$86.57/86.52$& $94.27$\\
    \midrule
    \textbf{+MixUp+\modelGrad} & $87.35/87.41$ & $\bm{94.50}$\\
    \textbf{+MixUp+\modelGradD} & $87.45/87.44$ & $94.04$\\
    \textbf{+MixUp+\modelGradR} &$\bm{87.48/87.49}$&$94.15$\\
    \midrule
    \textbf{+\modelMix+\modelGrad} & $\bm{87.97/87.87}$ & $\bm{95.18}$\\
    \textbf{+\modelMix+\modelGradD} & $87.81/87.67$ & $94.61$\\
    \textbf{+\modelMix+\modelGradR} &$87.77/87.62$&$94.95$\\
    \bottomrule
    \end{tabular}
    }
  \caption{KD+\modelGrad~ and its variants performance with different signs. \modelGradD~ denotes FGSM with Decent Projection. \modelGradD~ uses the opposite sign to \modelGrad~. \modelGradR~ denotes FGSM with Random Projection. \modelGradR~ uses the random sign.}
  \label{tab:sign}
    \end{minipage}
\end{table}

\subsection{Ablation Study}
In ablation studies, we follow settings used in Table \ref{tab:teacherboost} unless otherwise stated.

\textbf{Perturbation scale for $\epsilon$ in \modelGrad
} 
The key hyperparameter in \modelGrad~ is $\epsilon$ in Equation (\ref{eq:fgsm}). Small $\epsilon$ will make $x_{\text{\modelGrad}}$ the same as the original input. Large $\epsilon$ tends to make $x_{\text{\modelGrad}}$ hard to understand, meaningless, and out of the domain. Therefore, a proper $\epsilon$ is essential. Our experiments find that $\epsilon = 35$ is the best fit for T5 embeddings. Table \ref{tab:step} shows KD+\modelGrad~ performance with different $\epsilon$. 

\textbf{Signs of gradients in \modelGrad\ are not important} The effectiveness of \modelGrad~ comes from gradients' signs and the projection in AugPro. To prove that \modelGrad~ mainly benefits from AugPro, we implement two \modelGrad~ variants: \modelGradD~ (Descent Projection) that uses the opposite signs to \modelGrad~, and \modelGradR~ (Random Projection) that uses random signs. Table \ref{tab:sign} shows the results of \modelGrad~ and its two variants. 
We can observe \modelGrad~ has a similar score to its variants in all settings. Thus \modelGrad~ mainly benefits from AugPro.
We also conduct experiments that follow the setting of Table \ref{tab:distill}, and results can be found in Appendix \ref{app:sign}.

\textbf{AugPro generates diverse labels}
We show that AugPro generates diverse labels at the end of Section \ref{Sec:method}. Here we empirically show that assuming AugPro preserving labels may harm performance. If AugPro preserves labels, \modelMix\ and \modelGrad\ data should have the same labels as MixUp and original data, respectively. We use these augmented data together with labels to fine-tune student models directly. Results in Table \ref{tab:selfboost} suggest that such augmented data and labels may harm performance. Therefore, AugPro generates diverse labels and does not necessarily preserve labels.


\textbf{AugPro consistently benefits KD with different data sizes} Figure \ref{fig:ds} shows the performance of AugPro with different data sizes. It can be observed that AugPro is better than all baselines in all data sizes. Moreover, AugPro is extremely useful when the data size is small. For example, AugPro can improve the accuracy of 4\% (SST-2) and 6\% (MNLI-M) when the data size is 10\%. We also report results on the MNLI-MM dataset in Appendix \ref{app:ds}.


\begin{figure}[!t]
    \begin{minipage}{0.48\textwidth}
    \centering
    \resizebox{\linewidth}{!}{
    \includegraphics{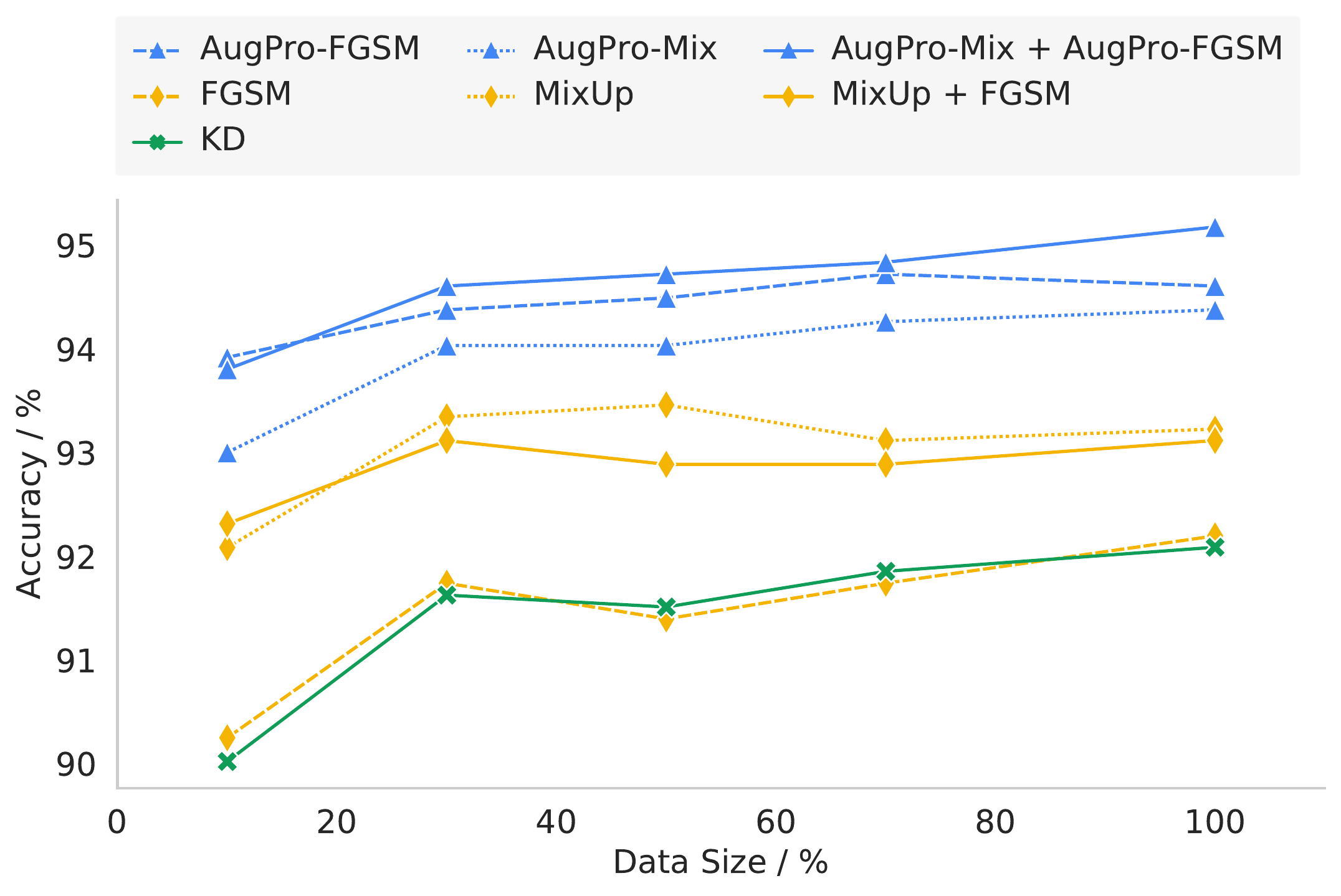}
    }
    \centerline{(a) SST-2}
    \end{minipage}
    \hfill
    \begin{minipage}{0.48\textwidth}
    \centering
    \resizebox{\linewidth}{!}{
    \includegraphics{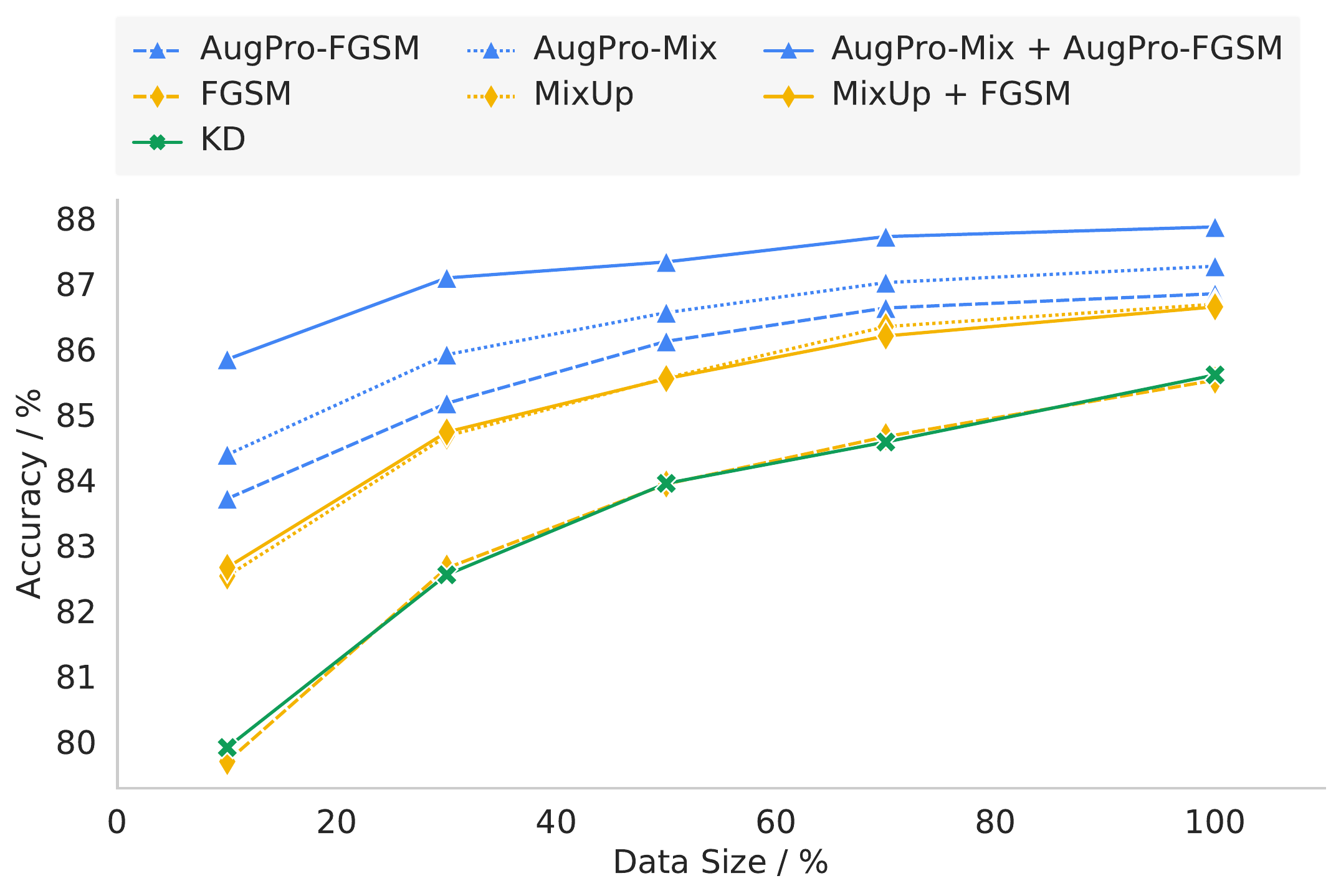}
    }
    \centerline{(b) MNLI-M}
    \end{minipage}
    \caption{AugPro performance with different data sizes. Figure (a) and Figure (b) are for SST-2 dataset and MNLI-M dataset. Blue lines (or triangle markers) are AugPro methods. Yellow lines (or diamond markers) are baseline methods. The green line (or X marker) is KD. AugPro has the same line type as the corresponding baseline. For example, \modelGrad~ and FGSM are all dashed lines.}
    \label{fig:ds}
\end{figure}


\section{Conclusions and Future Work}
We propose AugPro, an effective and efficient data augmentation paradigm for knowledge distillation. We use projections to tackle the problem of shifting decision boundaries caused by traditional representation interpolation methods in knowledge distillation. Moreover, AugPro has low computation costs and is fast in modern computing architectures. Results on GLUE tasks prove the effectiveness and efficiency of AugPro. In the future, we will further explore the impact of AugPro on labels to make it helpful in other scenarios.



\bibliography{iclr2023_conference}
\bibliographystyle{iclr2023_conference}

\newpage
\appendix

\section{The concept figure of AugPro}
Here we show a concept figure (Figure \ref{fig:concept}) to let readers better understand the difference between AugPro (e.g., AugPro-Mix) and previous works (e.g., MixUp \citep{zhang2017mixup}).

\begin{figure}[!htbp]
    \centering
    \resizebox{\linewidth}{!}{
    \includegraphics{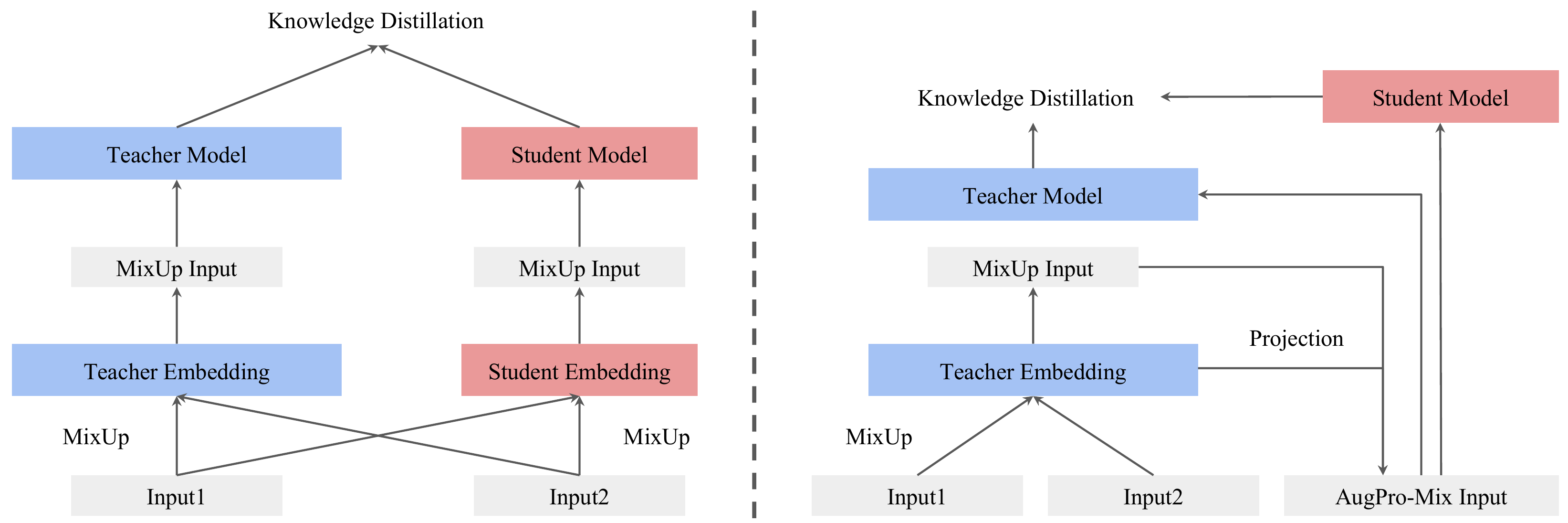}
    }
    \caption{\textbf{Left}: MixUp with knowledge distillation. \textbf{Right}: AugPro-Mix with knowledge distillation.}
    \label{fig:concept}
\end{figure}

\section{Concrete numbers of the example in Section \ref{Sec:intuition}}
\label{app:number}
$\calX=\{x_1=(2.5,2),x_2=(2,-2),x_3=(-2.5,-2),x_4=(-2,2)\}$

$\gY=\{ y_1=+1,y_2=+1,y_3=-1,y_4=-1\}$

\begin{itemize}
    \item $\beta^*=(4/9,-1/18),b^*=0.$
    \item $x_{\mix}=\frac{13}{25}x_3+\frac{12}{25}x_1, \beta_{\mix}=(250/533,200/533), b_{\mix}=-12/13$
    \item $\beta_{\mathrm{MixUp}-P}=(4/9,0), b_{\mathrm{MixUp}-P}=1/9$
\end{itemize}

\section{Implementation Details}
\label{app:training}
\subsection{Hyperparameters}
We use JAX and T5X to implement EncT5 and AugPro, and use T5 1.1 checkpoints to initialize models. The batch size is 512, and the maximum sentence length is 128. Training steps are 0.6M for all experiments. We use 8 TPU v3 slices to do all experiments.

To fine-tune the teacher model, we use a dropout rate of 0.1 and a learning rate of 1e-3 for all GLUE tasks.

For knowledge distillation, we set the dropout rate to be 0.1 for both the teacher and student models. We find that adding dropout to teachers will make the distillation better. We run all experiments with 1e-3 and 1e-5 learning rates and report the best results. As a result, learning rate is set to 1e-5 for all experiments on the STSB dataset, \textbf{EncT5$_8$-FT} and \textbf{EncT5$_8$-KD} experiments on CoLA, MRPC and RTE datasets. All other experiments use the learning rate 1e-3.

The $\lambda$ (Equation (\ref{eq:mix})) for \modelMix~ is 0.5. Previous works \citep{zhang2017mixup,liang2020mixkd} use a beta distribution to sample $\lambda$ for MixUp. We try $\lambda \sim Beta(0.4,0.4)$ and $\lambda = 0.5$ for MixUp and find they have similar performance in most tasks. In some tasks such as CoLA, $\lambda = 0.5$ is better. Therefore, we use $\lambda=0.5$ for MixUp.

Following previous works \citep{zhou2021flipda}, we set $k=15$ for the KNN baseline and randomly select $0.1$ portion of tokens to replace. We use outputs of the 4th layer of the student model and the 12th layer of the teacher model, i.e., the middle layer of both models, to conduct TMix experiments.

\subsection{Implementation detail of Equation \ref{eq:x_mixp}}

Algorithm \ref{algo1} shows the AugPro pipeline. Here, we show a detailed implementation of Equation \ref{eq:x_mixp} in Algorithm \ref{algo2}. We believe these two algorithms can help readers reproduce our methods and results.

\begin{algorithm}[!htbp]
        \small
        \caption{Projection Algorithm}
        \KwInput{Augmented representations $\mathcal{B}_{rep} \in \mathbb{R}^{B \times L \times H}$, where $B$ is the batch size, $L$ is the sentence length, and $H$ is the embedding dimension. Vocabulary $V$ contains $Z$ tokens. Token embeddings $\mathbf{E} \in \mathbb{R}^{Z \times H}$.}
        \KwOutput{Projected augmented data $\mathcal{B}' \in V^{B \times L}$}
        
        \begin{itemize}
            \item Compute the similarity between each token pair $\mathbf{Sims} = \mathcal{B}_{rep} \cdot \mathbf{E}^T \in \mathbb{R}^{B \times L \times Z}$
            \item Find the nearest neighbor $\mathcal{B}'_{index} = \mathop{\arg\min}\limits_{axis=2}\ \mathbf{Sims} \in \mathbb{N}^{B \times L}$
            \item Find tokens according to indices to get $\mathcal{B}'$
            \item \textbf{return} $\mathcal{B}'$
        \end{itemize}
        
    \label{algo2}
\end{algorithm}

\section{Examples of \modelMix~ and \modelGrad}
\label{app:example}
We use three MNLI inputs to show the augmented data by \modelMix~and \modelGrad.

Following are three MNLI inputs:
\begin{enumerate}[label=(\roman*)]
    \item \texttt{\textbf{hypothesis}: The judgments need to consider the broader public interest.\\ \textbf{premise}: These judgments need to be made in a consistent manner with consideration of the broader public interest in the program or activity under review.}
    \item \texttt{\textbf{hypothesis}: I agree.\\ \textbf{premise}: yeah i i agree with that}
    \item \texttt{\textbf{hypothesis}: She's never been to a hospital. \\ \textbf{premise}: and uh as if that wasn't bad enough the the ones that were half alive that they rushed to the hospital and she got to work on and she got to see them die and uh and they just get all the world's worst situations very few rewarding situations uh and}
\end{enumerate}

If we use \modelMix~ on (i) and (ii), (ii) and (iii), we will get:
\begin{enumerate}[label=(\roman* + \roman*i)]
    \item \texttt{\textbf{hypothesis}: I judgment.\\ \textbf{premise}: yeah judgmenti needi agree made in}
    \item \texttt{\textbf{hypothesis}: She agree. never been to a hospital.\\ \textbf{premise}: yeah uh as agreeif that wasn't bad enough the the ones that were half alive that they rushed to the hospital and she got to work on and she got to see them die and uh and they just get all the world's worst situations very few rewarding situations uh an}
\end{enumerate}

If we use \modelGrad~ on (i), (ii) and (iii), we will get:
\begin{enumerate}[label=(\roman*)]
    \item \texttt{\textbf{hypothesis}: River judgment Mit handy to consider the broader public interest. \\ \textbf{premise}: These judgment Mit handy to be made in a consistent manner with consideration- the broader public interest in the federal clădire activity under reviewed.}
    \item \texttt{\textbf{hypothesis}: Kyle agree.\\ \textbf{premise}: yeah  chambres  chambres agree with that}
    \item \texttt{\textbf{hypothesis}: She' Mit never been to a hospital.\\ \textbf{premise}: and uh as if that wasn' financing bad enough the the ones that are half alive that they rushed to the hospital and she got to work on and she got to see them die and uh and they just get all the world' Mit worst situations very few rewarding situations uh and}
\end{enumerate}

The above-augmented sentences are originally token lists, but not real sentences. Luckily, T5 uses SentencePiece to construct its vocabulary, which supports the precise de-tokenization for any token list. To convert token lists to sentences, we use a simple command \texttt{"".join(tokens).replace("\_"," ")}.

\textbf{Linguistic Analysis.} Our motivation focuses on the perspective of machine learning, i.e., avoiding shifting decision boundaries by converting representations to tokens. Here, we would like to add a brief analysis from a linguistic perspective. We can observe that the above-augmented sentences may have grammatical errors, "meaningless" tokens, and may be less meaningful than original sentences. However, "meaningless" to humans does not suggest meaningless to models, as AugPro indeed boosts the distillation performance. Besides, augmented sentences are not totally semantically meaningless to humans. It is hard to see why augmented data is so helpful from the linguistic perspective, suggesting that we should focus on analyzing these data from the machine learning perspective, which is exactly our motivation. To further support our motivation, we conduct a simple baseline "random generation" that randomly chooses tokens from the vocabulary and concatenates them to form an augmented sentence. Random generation can generate meaningless sentences easily. Results are shown in Table \ref{tab:random}. We can conclude that random generation is a poor augmentation method, suggesting that AugPro is helpful not because of the meaningful or meaningless semantics, but because of avoiding shifting decision boundaries.

\begin{table}[t]
  \centering
  \resizebox{\linewidth}{!}{
  \begin{tabular}{l|cccccccc}
    \toprule
          & \textbf{SST-2} & \textbf{CoLA} & \textbf{MNLI-MM/M} & \textbf{QNLI} & \textbf{QQP} & \textbf{MRPC} & \textbf{STS-B} & \textbf{RTE} \\
          & \textbf{Acc}   & \textbf{Matthew} & \textbf{Acc}    & \textbf{Acc}  & \textbf{Acc/F1} & \textbf{Acc/F1} & \textbf{PC/SC} & \textbf{Acc} \\
          & \textbf{67.3k}   & \textbf{8.5k} & \textbf{392.7k}    & \textbf{104.7k}  & \textbf{363.8k} & \textbf{3.7k} & \textbf{5.7k} & \textbf{2.5k} \\
    \midrule
    \textbf{EncT5$_{24}$-FT (354M)} & $97.20$ & $63.60$ & $91.40/91.10$ & $95.40$ & $92.73/90.00$ & $91.42/93.30$ & $88.19/88.00$ & $86.30$ \\
    \textbf{EncT5$_{8}$-FT\ \ \  (37M)}  & $92.89$ & $45.84$ & $84.69/84.26$ & $89.84$ & $91.45/88.41$ & $87.01/90.91$ & $86.39/85.94$ & $59.21$\\
    \hline
    \hline
    \textbf{EncT5$_{8}$-KD} & $92.09$ & $45.56$ & $85.93/85.61$ & $89.46$ & $91.36/88.24$ & $84.56/88.85$ & $87.29/87.18$ & $61.37$\\
    \midrule
    \textbf{+Random} & $92.78$ & $45.89$ & $86.37/85.55$ & $89.35$ & $91.42/88.43$ & $84.07/88.93$ & $87.72/87.57$ & $61.37$ \\
    \midrule
     \textbf{+MixUp} & $93.23$ & $51.63$ & $86.73/86.69$ & $91.31$ & $91.82/88.97$ & $88.48/91.68$ & $87.47/87.33$ & $62.82$ \\
     \textbf{+\modelMix} & $94.38$ & $57.60$ & $87.40/87.27$ & $92.06$ & $92.06/89.23$ & $89.46/92.34$ & $88.10/87.87$ & $71.84$\\
    \bottomrule
    \end{tabular}
    }
  \caption{Results of the random generation baseline.
  }
  \label{tab:random}
\end{table}

\section{Signs of gradients in \modelGrad\ is not important (More results)}
\label{app:sign}
Table \ref{tab:app_sign} shows the effect of signs of gradients in \modelGrad~with the setting of Table \ref{tab:distill}. The conclusion is same as the conslusion concluded from Table \ref{tab:sign}.
\begin{table}[!htbp]
  \centering
  \resizebox{0.6\linewidth}{!}{%
  \begin{tabular}{l|ccc}
    \toprule
          & \textbf{MNLI-MM/M} & \textbf{SST-2} & \textbf{QNLI} \\
    \midrule
    \textbf{KD+\modelGrad} & $\bm{85.87/85.82}$ & $\bm{94.27}$ & $91.03$\\
    \textbf{KD+\modelGradD} & $\bm{85.87}/85.52$ & $93.92$ & $\bm{91.05}$\\
    \textbf{KD+\modelGradR} &$85.6/85.39$&$93.35$&$90.85$\\
    \midrule
    \textbf{KD+MixUp+\modelGrad} & $86.20/86.28$ & $93.81$ & $90.99$\\
    \textbf{KD+MixUp+\modelGradD} & $\bm{86.44}/86.19$ & $93.92$ & $90.98$\\
    \textbf{KD+MixUp+\modelGradR} &$86.38/\bm{86.42}$&$\bm{94.04}$&$\bm{91.34}$\\
    \midrule
    \textbf{KD+\modelMix+\modelGrad} & $\bm{86.76}/\bm{86.81}$ & $\bm{94.04}$ & $\bm{91.58}$\\
    \textbf{KD+\modelMix+\modelGradD} & $86.59/86.57$ & $93.81$ & $91.56$\\
    \textbf{KD+\modelMix+\modelGradR} &$86.62/86.48$&$93.69$&$91.4$\\
    \bottomrule
    \end{tabular}
    }
  \caption{KD+\modelGrad~ and its variants performance with different signs. \modelGradD~ denotes FGSM with Decent Projection. \modelGradD~ uses the opposite sign to \modelGrad~. \modelGradR~ denotes FGSM with Random Projection. \modelGradR~ uses the random sign. This table follows the setting in Table \ref{tab:distill}.}
  \label{tab:app_sign}
\end{table}

\section{AugPro consistently benefits KD with different data sizes (More results)}
\label{app:ds}
The main texts show that AugPro consistently benefits KD with different data sizes on SST-2 and MNLI-M datasets. Figure \ref{tab:app_ds} shows this conclusion still holds on the MNLI-MM dataset. The overall trend is similar to the MNLI-M dataset.
\begin{figure}[!htbp]
    \centering
    \resizebox{0.6\linewidth}{!}{
    \includegraphics{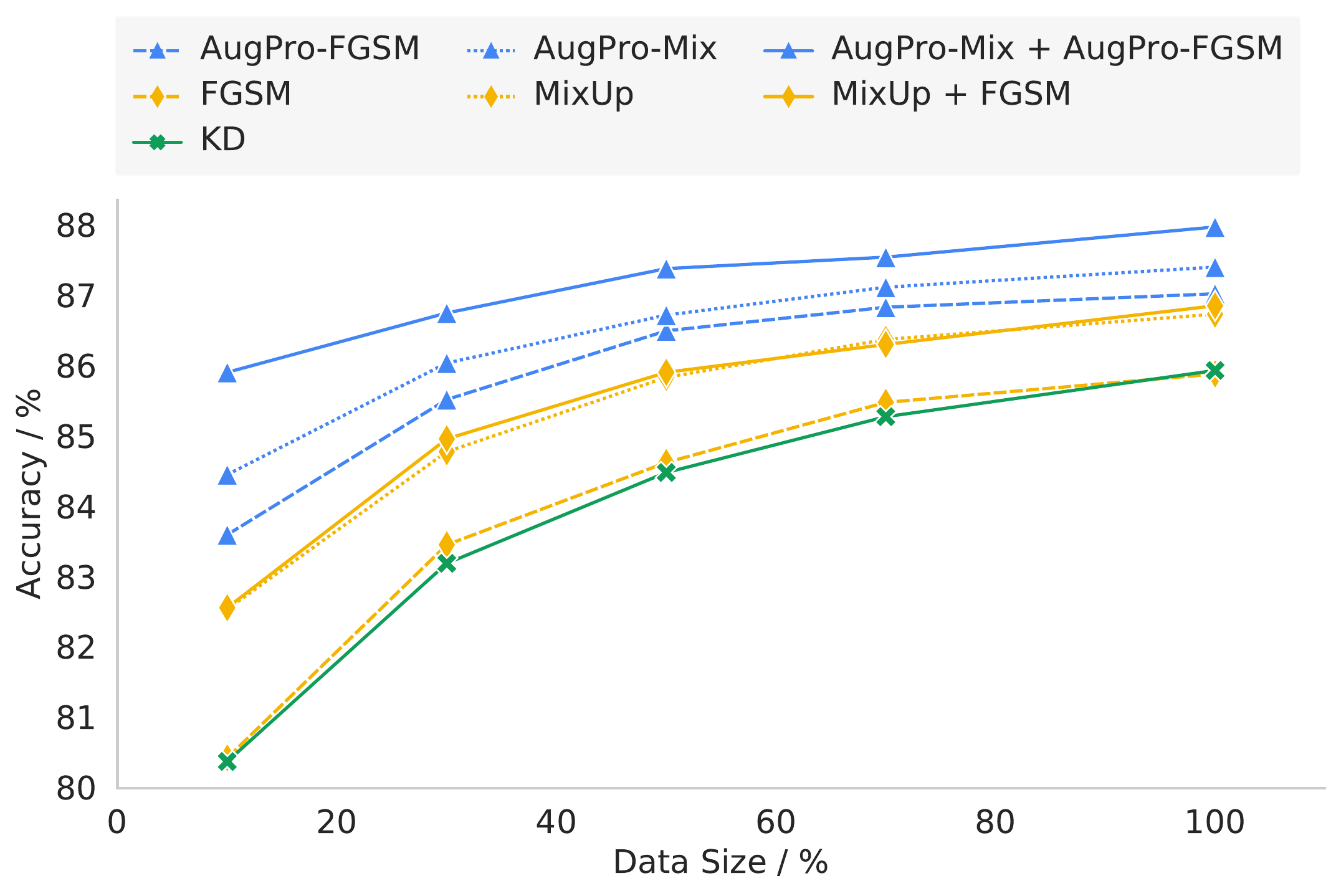}
    }
    \caption{AugPro performance with different data sizes on MNLI-MM dataset. Blue lines (or triangle markers) are AugPro methods. Yellow lines (or diamond markers) are baseline methods. The green line (or X marker) is KD. AugPro has the same line type as the corresponding baseline. For example, \modelGrad~ and FGSM are all dashed lines.}
    \label{tab:app_ds}
\end{figure}

\section{KNN is slower than other methods}
\label{app:knn}
There are two reasons that KNN is slower than other methods. First, KNN has a higher computational complexity $\mathcal{O}(NVdlogk)$ than AugPro ($\mathcal{O}(NVd)$). Second, KNN is hard to be implemented by XLA. A popular and fast implementation of KNN is to use \texttt{np.argpartition}. However, XLA does not support \texttt{partition} operations\footnote{https://github.com/google/jax/issues/10541}, making KNN hard to be implemented with JAX on TPUs. To this end, ~\cite{chern2022tpu} propose another method to run KNN on TPUs at peak FLOP/s. We use their method to implement KNN, but such implementation may not be the optimal solution for KNN.

\section{AugPro can achieve less error rates}
\label{app:moredata}
Suppose $\calX=\{-1,1\}^{2\log n}$ is the universe of all data with labels $\calY$ to learn.
Define a distribution $\gP$ such that $(x,y)\sim\gP$ if $x$ is uniformly independently drawn from $\gX$ and $y$ is uiformly independently drawn from $\{-1,+1\}$.
$\calX_\train$ is the training data compositing of $n$ samples uniformly and independently drawn from $\calX$.

Suppose $n$ is even.
Let $x(j)$ be the $j$-th coordinate of vector $x$.
We construct $\calX_{\mixp}$ as follows:
Index the elements in $\calX_{\train}$ arbitrarily, and $x_i$ denotes the $i$-th element. 
For each positive integer $i\in[n/2]$, we construct an augmented data $z_i$ such that, for each coordinate $j\in [2\log n]$, $z_i(j)=x_{2i-1}(j)$ if $x_{2i-1}(j)=x_{2i}(j)$, otherwise $z_i(j)$ is uniformly chosen from $\{1,-1\}$.

Let's consider optimal algorithms $\gA$ and $\gA_{\mixp}$. $\gA$ can only observe $\calX_{\train}$ (or $\calX_{\mix}$, $\calX_{\FGSM}$) and their labels $\gY_{\train}$ (or $\gY_{\mix}$, $\gY_{\FGSM}$). For proof, we assume $\gA$ can only observe $\calX_{\train}$ and their labels $\gY_{\train}$. The other two can be proved similarly. $\gA_{\mixp}$ can observe $\calX_{\mixp}$ and their corresponding labels $\gY_{\mixp}$.

Define the generalization error $\error(\gA):=\Pr_{(x,y)\sim\gP,(\gX_{\train},\gY_{\train})}[\gA(x,\gX_{\train},\gY_{\train})\ne y]$, where $\gA(x,\gX_{\train},\gY_{\train})\in\{+1,-1\}$ is the prediction of $x$ outputted by $\gA$,
and 
$\error(\gA_{\mixp})=\Pr_{(x,y)\sim\gP,(\gX_{\mixp},\gY_{\mixp})}[\gA(x,\gX_{\mixp},\gY_{\mixp})\ne y]$.
We have the following claim:
\begin{claim}
\label{clm:mixp}
    One has 
\begin{align*}
\lim_{n\to\infty}\frac{\error(\gA)-\error(\gA_{\mixp)})}{1/(4n)}=1.
\end{align*}
\end{claim}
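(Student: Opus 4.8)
The plan is to exploit the fact that the labels carry no signal about the features, so that the only way either learner can beat chance on a test point is to have observed that exact point. First I would pin down the Bayes-optimal algorithms: conditioned on the observed sample, the label of any point not seen exactly remains a fair coin independent of everything, so every predictor errs there with probability exactly $\tfrac12$, whereas on an observed point the optimal rule stores and reproduces the label. (This is the only reading of the setup under which augmentation can help; with per-draw-independent labels both errors would be $\tfrac12$.) Hence, taking logarithms base two so that $|\calX|=n^2$,
\begin{align*}
\error(\gA)=\tfrac12\Pr[x\notin\calX_\train],\qquad \error(\gA_\mixp)=\tfrac12\Pr[x\notin\calX_\mixp].
\end{align*}
Reading $\calX_\mixp=\calX_\train\cup\{z_1,\dots,z_{n/2}\}$, the two observation sets are nested, so with $x$ the independent uniform test point the difference collapses to
\begin{align*}
\error(\gA)-\error(\gA_\mixp)=\tfrac12\Pr[x\in\calX_\mixp\setminus\calX_\train]=\tfrac12\Pr[\exists\, i:\ x=z_i,\ x\notin\calX_\train].
\end{align*}

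The key lemma I would prove is that each augmented point $z_i$ is marginally uniform on $\calX$. This is a one-line coordinatewise check: on a coordinate where the two parents agree, $z_i$ copies their common (uniform) value, and on a coordinate where they disagree $z_i$ is a fresh fair coin, so every coordinate is uniform and the coordinates are independent. Therefore $\Pr[x=z_i]=1/n^2$ for each $i$, and the union bound immediately gives the leading order,
\begin{align*}
\Pr[\exists\, i:\ x=z_i,\ x\notin\calX_\train]\ \le\ \sum_{i=1}^{n/2}\Pr[x=z_i]=\frac{n}{2}\cdot\frac{1}{n^2}=\frac{1}{2n},
\end{align*}
which already yields $\error(\gA)-\error(\gA_\mixp)\approx\tfrac1{4n}$; intuitively, the $n/2$ children behave like $n/2$ extra fresh uniform samples, each raising coverage by $1/n^2$.

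The main work, and the expected obstacle, is the matching lower bound, i.e.\ showing every correction is $o(1/n)$ so that dividing by $1/(4n)$ yields exactly $1$ in the limit. By Bonferroni I would subtract (i) the pairwise overcount $\sum_{i<i'}\Pr[x=z_i=z_{i'}]=\binom{n/2}{2}/n^4=O(1/n^2)$, which uses independence of distinct children, and (ii) the mass $\sum_i\Pr[x=z_i,\ x\in\calX_\train]$. Term (ii) is the delicate one, because $z_i$ is \emph{not} independent of its own parents: writing $D\sim\mathrm{Bin}(2\log n,\tfrac12)$ for the number of disagreeing coordinates gives $\Pr[z_i=x_{2i-1}]=\E[2^{-D}]=(3/4)^{2\log n}=n^{2\log_2 3-4}$, so each parent--child collision contributes $O(n^{2\log_2 3-4}/n^2)$ and the whole of term (ii) is $O(n^{2\log_2 3-5})$. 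Confirming the exponent $2\log_2 3-5<-1$ (since $2\log_2 3=\log_2 9\approx3.17$) shows this is $o(1/n)$. Combining the bounds gives $\Pr[x\in\calX_\mixp\setminus\calX_\train]=\tfrac1{2n}(1-o(1))$, hence $\error(\gA)-\error(\gA_\mixp)=\tfrac1{4n}(1-o(1))$ and the stated limit equals $1$. I expect the careful tracking of the non-uniform parent--child collision probability in (ii) to be the only step needing genuine attention, with the rest being routine union-bound bookkeeping.
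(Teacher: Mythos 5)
Your proof is correct, and it reaches the result by a genuinely different route from the paper's. Both arguments start from the same reduction: because labels carry no information about features, the optimal predictor errs with probability exactly $1/2$ off the observed set and $0$ on it, so $\error(\gA)-\error(\gA_{\mixp})$ equals half the probability that the test point lies in $\calX_{\mixp}\setminus\calX_{\train}$, i.e. $\E[\,|\gX_{\mixp}|-|\gX_{\train}|\,]/(2n^2)$. The paper estimates this expectation via concentration: it sets up a martingale over the sequential construction of $\gX_{\train}$ and $\gX_{\mixp}$ and applies Freedman's inequality to get $|\gX_{\train}|\ge n-O(\sqrt{n\log n})$ and $|\gX_{\mixp}|\ge 3n/2-O(\sqrt{n\log n})$ with probability $1-1/\poly(n)$, pairing these with the trivial upper bounds $n$ and $3n/2$. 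You instead note that only an expectation is needed and compute it directly: your key lemma that each child $z_i$ is marginally uniform on $\calX$ (hence contributes $1/n^2$ of fresh coverage), a union bound for the upper estimate, and a Bonferroni correction for the lower one, where the only delicate term is the parent--child collision probability $\E[2^{-D}]=(3/4)^{2\log_2 n}=n^{2\log_2 3-4}$, whose total contribution $O(n^{2\log_2 3-5})=o(1/n)$ you verify correctly (the cross terms with non-parent training points and between distinct children are $O(1/n^2)$ by independence). Your route is more elementary --- no martingales or concentration inequalities --- and in fact yields a sharper error term; the paper's route additionally produces high-probability statements about $|\gX_{\train}|$ and $|\gX_{\mixp}|$ that are not needed for this claim but are reused in the proof of Claim 2.
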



As for constructing $\calX_{\FGSMP}$, for each $x_i\in\calX_\train$, we 
add Gaussian $\calN(0,4)$ to each coordinate of $x_i$, and then project back to $\calX$ to get $w_i$.
Similarly, assume $\gA_{\FGSMP}$ can observe the dataset $\gX_{\FGSMP}$ and its labels $\gY_{\FGSMP}$, and define the generalization error to be $\error(\gA_{\FGSMP})=\Pr_{(x,y)\sim\gP,(\gX_{\FGSMP},\gY_{\FGSMP})}[\gA(x,\gX_{\FGSMP},\gY_{\FGSMP})\ne y]$.
\begin{claim}
One has
\begin{align*}
    \lim_{n\to \infty}\frac{\error(\gA)-\error(\gA_{\FGSMP})}{1/(2n)}=1.
\end{align*}
\end{claim}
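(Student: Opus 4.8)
The plan is to reduce both error terms to a count of \emph{distinct observed inputs}, exploiting that under $\gP$ the labels form a uniformly random $\pm 1$ function on $\calX$, independent across points. For any algorithm observing a labeled multiset $T$, the optimal (Bayes) predictor memorizes the label on the distinct set $S(T)\subseteq\calX$ of inputs it has seen and is forced to guess on $\calX\setminus S(T)$, where the posterior label is uniform and every guess errs with probability exactly $1/2$. Since the test input $x$ is uniform and independent of the training randomness, this yields $\error(\gA)=\tfrac12\bigl(1-\E|S|/n^2\bigr)$ and $\error(\gA_{\FGSMP})=\tfrac12\bigl(1-\E|S_{\FGSMP}|/n^2\bigr)$, where $S$ is the distinct set of $\calX_\train$, $S_{\FGSMP}$ is the distinct set of the augmented dataset $\gX_{\FGSMP}=\calX_\train\cup\{w_1,\dots,w_n\}$ (augmentation adds to, rather than replaces, the training data, matching the convention under which the $\mixp$ claim is positive), and $|\calX|=n^2$. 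Subtracting, the whole claim collapses to a single estimate,
\begin{align*}
\error(\gA)-\error(\gA_{\FGSMP})=\frac{\E|S_{\FGSMP}|-\E|S|}{2n^2},
\end{align*}
so it suffices to prove $\E|S_{\FGSMP}|-\E|S|=n\,(1+o(1))$.

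Next I would dispatch the baseline count by a standard collision argument: $n$ points drawn uniformly and independently from a universe of size $n^2$ give $\E|S|=n^2\bigl(1-(1-1/n^2)^n\bigr)=n-O(1)$, since the expected number of collisions is $O\bigl(\binom{n}{2}/n^2\bigr)=O(1)$.

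The crux is estimating $\E|S_{\FGSMP}|$. Writing $w_i=\sign(x_i+g_i)$ with $g_i\sim\calN(0,4I)$, each coordinate of $x_i$ is flipped independently with probability $p=\Phi(-1/2)\in(0,1)$; consequently each $w_i$ is \emph{marginally} uniform on $\calX$, while the pairs $(x_i,w_i)$ are i.i.d.\ across $i$ but \emph{internally correlated}. The clean way to count distinct points is by symmetry of the construction under coordinate permutations and sign flips: $\E|S_{\FGSMP}|=n^2\Pr[v_0\text{ is hit}]$ for a fixed $v_0\in\calX$, and using independence of the pairs across $i$,
\begin{align*}
\Pr[v_0\text{ not hit}]=\bigl(1-\Pr[x_1=v_0\text{ or }w_1=v_0]\bigr)^n=\Bigl(1-\tfrac{2-q}{n^2}\Bigr)^n,
\end{align*}
where $q=\Pr[w_1=x_1]$ is precisely the inclusion–exclusion overlap term. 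The key quantitative fact is that this overlap vanishes: $q=(1-p)^{2\log n}=n^{2\log_2(1-p)}\to 0$ because $1-p<1$ and the dimension grows with $n$. Expanding the $n$-th power then gives $\E|S_{\FGSMP}|=n(2-q)-O(1)=2n-O(1)$.

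Combining, $\E|S_{\FGSMP}|-\E|S|=(2n-O(1))-(n-O(1))=n(1-q)-O(1)=n(1+o(1))$, whence $\error(\gA)-\error(\gA_{\FGSMP})=\tfrac{1+o(1)}{2n}$ and the ratio in the claim tends to $1$. The main obstacle is the third step: one must track the correlation between $x_i$ and $w_i$ correctly (this is exactly what produces the overlap term $q$) and verify $q\to 0$, so that the $n$ projected points contribute essentially $n$ genuinely new distinct inputs instead of duplicating their parents. A secondary point needing care is the Bayes-optimality characterization of the first step, namely that no algorithm can extract any information about the label of an unobserved input; this follows from the independence of the random labeling across distinct points, which makes the posterior on any unseen input uniform regardless of the observations.
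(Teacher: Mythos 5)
Your proposal is correct, but it reaches the conclusion by a genuinely different route than the paper. The paper makes the same initial reduction (both errors equal $\tfrac12-\E[|\cdot|]/2n^2$ for the relevant distinct-point counts, so everything reduces to $\E[|\gX_{\FGSMP}|]-\E[|\calX_\train|]=n(1+o(1))$), but then proceeds by a sequential martingale construction: it defines indicators $D_i$ for the event that the $i$-th iteration contributes the maximal number of new points, bounds $\E[D_i\mid\gF_{i-1}]\ge 1-3/n$ using $\Pr[w_i=x_i]<0.7^{2\log n}<n^{-1.02}$, and applies Freedman's inequality to get the high-probability bounds $|\calX_\train|\ge n-c_1\sqrt{n\log n}$ and $|\gX_{\FGSMP}|\ge 2n-c_3\sqrt{n\log n}$, which are then converted to expectation bounds. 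You instead compute the expectations exactly by linearity over the $n^2$ points of the universe, using the marginal uniformity of $w_i$ and the inclusion--exclusion overlap $q=\Pr[w_1=x_1]=\Phi(1/2)^{2\log n}$, obtaining $\E[|S_{\FGSMP}|]=n(2-q)-O(1)$ and $\E[|S|]=n-O(1)$. Your route is shorter and sharper: it yields an $O(1)$ error term rather than $O(\sqrt{n\log n})$, avoids concentration machinery entirely, and only needs the weaker fact $q\to0$ rather than a polynomial rate on $q$; it also makes explicit the Bayes-optimality step that the paper calls ``evident.'' What the paper's approach buys is a high-probability statement about the realized cardinalities (not just their means), and structural parallelism with the proof of Claim~1, where the same Freedman template is reused. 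Both arguments hinge on the same key quantitative fact, namely that the perturbed point coincides with its parent with probability $\Phi(1/2)^{2\log n}=o(1)$, so the $n$ projected points are essentially all new.
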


\subsection{Proof of Claim 1}
\label{app:proof1}
We provide the following preliminary background of martingale to prove Claim~\ref{clm:mixp} for completeness.

\begin{definition}[Martingale]
A sequence of random variables $Y_1, Y_2,\cdots$ is a martingale with respect to another sequence $X_1,X_2,\cdots$ if for all $n$, $\E[|Y_n|]<\infty$ and $\E[Y_{n+1}\mid X_1,\cdots, X_n]=Y_n$.
\end{definition}

\begin{definition}[Martingale Difference]
    $\{D_k\}_{k=1}^{\infty}$ is a martingale difference sequence w.r.t. $\{X_k\}_{k=1}^{\infty}$ if for all $n$:
    \begin{itemize}
        \item $D_n$ is a measurable function of $X_1,\cdots,X_n$
        \item $\E[|D_n|]<\infty$
        \item $\E[D_{n+1}\mid X_1,\cdots,X_n]=0$.
    \end{itemize}
\end{definition}
If $Y_k$ is a martingale, then $D_k=Y_k-Y_{k-1}$ is a martingale difference.
For martingale difference, we have the following theorem:
\begin{proposition}[Freedman's Inequality, \cite{freedman1975tail}]
\label{prop:Freedman}
Consider a real-valued martingale difference sequence $\{X_t\}$ which is uniformly bounded, i.e. $|X_t|\le M$ almost surely for all $t$.
Define the predictable quadratic variation process of the martingale $W_t:=\sum_{j=1}^t\E[X_j^2\mid\gF_{j-1}]$ for all $t$, where $\{\gF_t\}$ is the filtration.
Then for all $\ell\ge 0$ and $\sigma^2>0$,
\begin{align*}
    \Pr\left[\exists k\ge0:|\sum_{t=1}^kX_t|\ge\ell \And W_k\le \sigma^2  \right]\leq2\exp\left\{\frac{-\ell^2/2}{\sigma^2+M\ell/3} 
    \right\}.
\end{align*}
\end{proposition}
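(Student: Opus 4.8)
The plan is to prove the one-sided tail bound $\Pr[\exists k\ge 0:\ S_k\ge\ell,\ W_k\le\sigma^2]\le\exp(-\ell^2/(2(\sigma^2+M\ell/3)))$, where $S_k:=\sum_{t=1}^kX_t$ with $S_0=0$, and then recover the two-sided statement by running the identical argument on the sequence $\{-X_t\}$, which has the very same predictable quadratic variation $W_k$, and taking a union bound; this is exactly what produces the factor $2$. So the real content is the one-sided bound, and it proceeds by an exponential-supermartingale (Chernoff-on-martingales) argument.

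First I would establish the single-step conditional moment generating function bound. Using that $u\mapsto(e^u-1-u)/u^2$ is nondecreasing on $\mathbb{R}$ and that $\theta X_t\le\theta M$ for $\theta>0$, I get the pointwise inequality $e^{\theta X_t}-1-\theta X_t\le X_t^2\cdot(e^{\theta M}-1-\theta M)/M^2$. Taking $\E[\cdot\mid\gF_{t-1}]$, the linear term vanishes because $\E[X_t\mid\gF_{t-1}]=0$, and applying $1+a\le e^a$ yields $\E[e^{\theta X_t}\mid\gF_{t-1}]\le\exp(\psi(\theta)\,\E[X_t^2\mid\gF_{t-1}])$, where $\psi(\theta):=(e^{\theta M}-1-\theta M)/M^2\ge 0$.

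Next, define $L_k:=\exp(\theta S_k-\psi(\theta)W_k)$. Since $W_k$ is $\gF_{k-1}$-measurable (predictable) and nondecreasing, the step bound gives $\E[L_k\mid\gF_{k-1}]\le L_{k-1}$, so $(L_k)_{k\ge0}$ is a nonnegative supermartingale with $L_0=1$. The key observation is that on the event $A=\{\exists k:\ S_k\ge\ell,\ W_k\le\sigma^2\}$, letting $\tau$ be the first index at which both conditions hold, positivity of $\psi(\theta)$ together with $S_\tau\ge\ell$ and $W_\tau\le\sigma^2$ gives $L_\tau=\exp(\theta S_\tau-\psi(\theta)W_\tau)\ge\exp(\theta\ell-\psi(\theta)\sigma^2)$, hence $A\subseteq\{\sup_k L_k\ge\exp(\theta\ell-\psi(\theta)\sigma^2)\}$. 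Ville's maximal inequality for nonnegative supermartingales then yields $\Pr[A]\le\exp(-\theta\ell+\psi(\theta)\sigma^2)$.

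Finally I would optimize the exponent over $\theta>0$. Rather than solving exactly (which gives the sharper Bennett form), I invoke the elementary estimate $e^u-1-u\le(u^2/2)/(1-u/3)$ for $0\le u<3$, so that $\psi(\theta)\le(\theta^2/2)/(1-\theta M/3)$, and substitute the convenient choice $\theta=\ell/(\sigma^2+M\ell/3)$; a short computation collapses the exponent to exactly $-\ell^2/(2(\sigma^2+M\ell/3))$. The step I expect to be the main obstacle is the passage through the maximal inequality: one must argue carefully that the \emph{simultaneous} requirement ``$S_k\ge\ell$ and $W_k\le\sigma^2$'' converts into a clean lower bound on the single supermartingale $L$, evaluated at a possibly infinite stopping time, and that the maximal inequality is applied over the entire unstopped horizon rather than at a fixed time. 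This works precisely because $W$ is predictable and nondecreasing, which is what makes $L$ a supermartingale and simultaneously guarantees the constraint $W_\tau\le\sigma^2$ is available at the first index where $S$ crosses $\ell$.
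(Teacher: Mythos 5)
Your proof is correct, but note that the paper itself does not prove this proposition at all: it is imported as a known result, with the proof deferred to the citation \cite{freedman1975tail}, and is then used as a black box in the proofs of Claims 1 and 2 (Appendices \ref{app:proof1} and \ref{app:proof2}). So there is no in-paper argument to compare against; what you have supplied is a self-contained proof of the quoted result, and it is essentially the canonical one (Freedman's original exponential-supermartingale argument). Every step checks out: monotonicity of $u\mapsto(e^u-1-u)/u^2$ gives the conditional bound $\E[e^{\theta X_t}\mid\gF_{t-1}]\le\exp\bigl(\psi(\theta)\,\E[X_t^2\mid\gF_{t-1}]\bigr)$ with $\psi(\theta)=(e^{\theta M}-1-\theta M)/M^2$; predictability of $W$ is exactly what makes $L_k=\exp(\theta S_k-\psi(\theta)W_k)$ a nonnegative supermartingale with $L_0=1$; the first index $\tau$ at which both $S_\tau\ge\ell$ and $W_\tau\le\sigma^2$ hold shows the target event lies inside $\{\sup_k L_k\ge e^{\theta\ell-\psi(\theta)\sigma^2}\}$, which Ville's inequality bounds by $e^{-\theta\ell+\psi(\theta)\sigma^2}$; and the Bernstein estimate $e^u-1-u\le(u^2/2)/(1-u/3)$ with the choice $\theta=\ell/(\sigma^2+M\ell/3)$ collapses the exponent to $-\ell^2/\bigl(2(\sigma^2+M\ell/3)\bigr)$ — the algebra is right, since $1-\theta M/3=\sigma^2/(\sigma^2+M\ell/3)$ makes the quadratic term equal to half the linear term — while applying the one-sided bound to $\{-X_t\}$ (same $M$, same $W$) and taking a union bound produces the factor $2$. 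The only practical difference between the two "approaches" is that the paper buys brevity by outsourcing the proof, whereas your argument would make the appendix self-contained at the cost of a page; if it were to be inserted, the one point worth stating explicitly is the maximal-inequality step you flagged, namely that Ville's inequality for nonnegative supermartingales applies to $\sup_{k\ge 0}L_k$ over the whole horizon, so no fixed-time or stopped-process truncation is needed.
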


With these tools, we are ready to prove the claim.
\begin{proof}[Proof of Claim~\ref{clm:mixp}]

In our setting, it is evident that $\error(\gA)= \Pr_{x\sim\gP}[x\notin\gX_{\train}]/2=1/2-\E[|\gX_{\train}|]/2n^2$,
where the expectation is taken over the randomness of $\calX_\train$, and $|\cdot|$ denotes the cardinality after removing duplicate elements.

Similarly, we have
$\error(\gA_{\mixp})= 1/2-\E[|\gX_{\mixp}|]/2n^2$.
It suffices to prove 
\begin{align*}
    \lim_{n\to\infty}\frac{\E[|\gX_{\mixp}|-|\gX_{\train}|]}{n/2}=1.
\end{align*}

First, we show for some constant $c_1>0$,
\begin{align}
\label{eq:large_calX_train}
    \Pr\big[|\calX_\train|\ge n-c_1\sqrt{n\log n}\big]\geq 1-1/\poly(n).
\end{align}

It is equivalent to drawing the elements in $\gX_\train$ one by one.
Let $\gX_\train^i$ denote the elements after drawing $x_i$.
Let $Y_i=|\gX_\train^i|$, and
$D_i$ be the indicator that $x_i\neq x_j$ for all $0<j<i$, where $D_1=1$.
Then we know $|\gX_\train|=Y_n$, $D_i=Y_i-Y_{i-1}$, and for any $\gX_\train^{i-1}$,
\begin{align*}
    \Pr[D_i=1\mid \gX_\train^{i-1}]=1-\frac{|\gX_\train^{i-1}|}{|\gX|}\ge 1-\frac{1}{n}.
\end{align*}
Let $\gF_i$ be the filtration, and hence $\E[D_i\mid \gF_{i-1}]\ge 1-\frac{1}{n}$.
Let $\TD_i:=D_i-\E[D_i\mid \gF_{i-1}]$ be a martingale difference sequence.
Hence $|\TD_i|\le 1$ almost surely, and $\E[\TD_i^2\mid \gF_{i-1}]\leq 1$.
Let $W_i=\sum_{j=1}^i\E[\TD_j^2\mid\gF_{j-1}]$.
By Freedman's Inequality, one has
\begin{align*}
    &~\Pr[Y_n< n-c_1\sqrt{n\log n}]\\
    =&~\Pr[\sum_{i=1}^{n}D_i< n-c_1\sqrt{n\log n} ]\\
    \le & ~\Pr[|\sum_{i=1}^{n}\TD_i|\le c_1\sqrt{n\log n}]\\
    =& ~ \Pr[|\sum_{i=1}^{n}\TD_i|\le c_1\sqrt{n\log n}\wedge W_n\le n]\\
    \leq & 2\exp(\frac{-c_1^2n\log n}{2n+2c_1\sqrt{n\log n}/3} ).
\end{align*}
Choosing $c_1$ large enough proves Equation~(\ref{eq:large_calX_train}).

Similarly we show for some constant $c_2>0$,
\begin{align}
\label{eq:large_mixp}
    \Pr\big[|\calX_{\mixp}|\geq 3n/2-3c_2\sqrt{n \log n}\big]\geq 1-1/\poly(n).
\end{align}

It is equivalent to constructing $\gX_{\mixp}$ by iterations. For $i$-th iteration, we draw $x_{2i-1}$ and $x_{2i}$ i.i.d. uniformly and construct $z_i$ as described before.
Let $\gX_{\mixp}^i$ denote the data we get after constructing $x_{2i-1}, x_{2i}$ and $z_i$.
Let $Y_i'=|\gX_{\mixp}^i|$ and $D_i'$ be the indicator that $Y_i'-Y_{i-1}'=3$ (i.e. all of the three data are distinct and first constructed).
We know $Y_{n/2}'\ge 3\sum_{i=1}^{n/2}D_i'
$.
For any fixed vector $x\in\{-1,1\}^{2\log n}$,
we know $\Pr[x_{2i-1}=x]=\Pr[x_{2i}=x]=\Pr[z_i=x]=1/n^2$.
Hence for any $\gX_{\mixp}^{i-1}$ and by union bound, one has
\begin{align*}
    \E[D_i'=1\mid \gX_{\mixp}^{i-1}]\geq 1-\frac{3(|\gX_{\mixp}^{i-1}|+3)}{|\gX|}\geq 1-\frac{9}{2n}.
\end{align*}

Let $\gF_{i}'$ be the filtration and hence $\E[D_i'\mid \gF_{i-1}']\geq 1-\frac{9}{2n}$.
Let $\TD_i':=D_i'-\E[D_i'\mid \gF_{i-1}]$ be a martingale difference sequence.
Similarly $|\TD_i'|\le 1$ almost surely and $\E[\TD_i'^2\mid \gF_{i-1}]\leq 1$.
Let $W_i'=\sum_{j=1}^i\E[\TD_j'^2\mid\gF_{j-1}']$.
By Freedman's Inequality, one has
\begin{align*}
    &~ \Pr[Y_{n/2}'< 3n/2-3c_2\sqrt{n\log n}]\\
    \le & ~ \Pr[\sum_{i=1}^{n/2}D_i'< n/2-c_2\sqrt{n\log n}]\\
    \le & ~ \Pr[|\sum_{i=1}^{n/2}\TD_i'|\leq c_2\sqrt{n\log n}-O(1)]\\
    =& ~\Pr[|\sum_{i=1}^{n/2}\TD_i'|\leq c_2\sqrt{n\log n}-O(1)\wedge W_{n/2}'\leq n/2]\\
    \leq & ~2\exp(\frac{-c_2'^2n\log n}{n+2c_2'\sqrt{n\log n}/3}),
\end{align*}
where $c_2'=c_2-O(1)$.
Choosing $c_2$ large enough completes the proof of Equation~(\ref{eq:large_mixp}).

Combining equations~(\ref{eq:large_calX_train}) and (\ref{eq:large_mixp}) proves the statement.
\end{proof}

\subsection{Proof of Claim 2}
\label{app:proof2}
\begin{proof}
Similarly, we have $\error(\gA_{\FGSMP})=1/2-\E[|\gX_{\FGSMP}|]/2n^2$ and it suffices to prove 
\begin{align*}
    \lim_{n\to\infty}\frac{\E[|\gX_{\FGSMP}|-\E[|\gX_{\train}|]]}{n}=1.
\end{align*}

We already have Equation~(\ref{eq:large_calX_train}).
It suffices to prove that for some constant $c_3>0$,
\begin{align}
\label{eq:\modelGrad_large}
    \Pr\big[ |\gX_{\FGSMP}|\ge 2n-c_3\sqrt{n\log n} \big]\ge 1-1/\poly(n).
\end{align}

For each $x_i\in\calX_{\FGSMP}$ and each coordinate $j\in[2\log n]$, we know $\Pr[x_i(j)=w_i(j)]=\Phi(1/2)<0.7$, where $\Phi$ is cumulative distribution function (CDF) of one-dimensional standard Gaussian distribution, i.e. $\Phi(t)=\Pr_{x\sim\gN(0,1)}[x\le t]$.

It is equivalent to constructing $\gX_{\FGSMP}$ iteration by iteration, where in $i$-th iteration, we draw $x_i$ and construct $w_i$ as described before.
Let $\gX_{\FGSMP}^i$ be the data we get after constructing $x_i$ and $w_i$, let $Y_i=|\gX_{\FGSMP}^i|$ and let $D_i$ be the indicator that $Y_i-Y_{i-1}=2$.
For any fixed vector $x\in\{-1,1\}^{2\log n}$, we know $\Pr[x_i=x]=1/n^2=\Pr[w_i=x]$ and $\Pr[w_i=x_i]<0.7^{2\log n}<1/n^{1.02}$.
Let $\gF_i$ be the filtration, and for any $\gX_{\FGSMP}^{i-1}$, by union bound, one has
\begin{align*}
    \E[D_i=1\mid \gX_{\FGSMP}^{-1}]\geq 1-\frac{2|\gX_{\FGSMP}^{i-1}|}{|\gX|}-1/n^{1.02}\ge 1-3/n.
\end{align*}
Let $\TD_i:=D_i-\E[D_i\mid\gF_{i-1}]$ be a martingale difference sequence.
We know $|\TD_i|\leq$ 1 and $\E[\TD_i\mid \gF_{i-1}]\le 1$ almost surely.
Let $W_i=\sum_{j=1}^{i}\E[\TD_i^2\mid \gF_{j-1}]$.
By Freedman's Inequality, one has
\begin{align*}
    &~\Pr[Y_n<2n-2c_3\sqrt{n\log n}]\\
    \le&~\Pr[\sum_{i=1}^{n}D_i<n-c_3\sqrt{n\log n}]\\
    \le&~\Pr[|\sum_{i=1}^{n}\TD_i|<c_3'\sqrt{n\log n}\wedge W_n\le n]\\
    \le&~2\exp(\frac{-c_3'^2n\log n}{2n+2c_3'\sqrt{n\log n}/3}),
\end{align*}
where $c_3'=c_3-O(1)$.
Choosing $c_3$ large enough completes the proof.

Combing Equation~(\ref{eq:large_calX_train}) and (\ref{eq:\modelGrad_large}) completes the proof.
\end{proof}

\end{document}